\documentclass[letterpaper]{article} 
\usepackage{aaai24}  
\usepackage{times}  
\usepackage{helvet}  
\usepackage{courier}  
\usepackage[hyphens]{url}  
\usepackage{graphicx} 
\urlstyle{rm} 
\usepackage{natbib}  
\usepackage{caption} 
\frenchspacing  
\setlength{\pdfpagewidth}{8.5in}  
\setlength{\pdfpageheight}{11in}  
%
\usepackage{algorithm}

%
\usepackage{newfloat}
\usepackage{listings}
\DeclareCaptionStyle{ruled}{labelfont=normalfont,labelsep=colon,strut=off} 
\lstset{%
	basicstyle={\footnotesize\ttfamily},
	numbers=left,numberstyle=\footnotesize,xleftmargin=2em,
	aboveskip=0pt,belowskip=0pt,%
	showstringspaces=false,tabsize=2,breaklines=true}
\floatstyle{ruled}
\newfloat{listing}{tb}{lst}{}
\floatname{listing}{Listing}
%
\pdfinfo{
/TemplateVersion (2024.1)
}

\nocopyright

\usepackage[T1]{fontenc}
\usepackage[utf8]{inputenc}
\usepackage{mathtools}

\usepackage{amssymb,mathrsfs}
\usepackage{amsthm}
\usepackage{bm}
\usepackage{scalerel}
\usepackage{nicefrac}
\usepackage{microtype} 
\usepackage[shortlabels]{enumitem}
\usepackage{graphicx}
\usepackage{epstopdf}
\DeclareGraphicsExtensions{.eps,.png,.jpg,.pdf}

\usepackage{url}
\usepackage{colortbl}
\usepackage{booktabs}
\usepackage{multirow}
\usepackage{colortbl,xcolor}
\usepackage{xparse,xstring}
\usepackage{calc}
\usepackage{etoolbox}

\makeatletter
\@ifpackageloaded{natbib}{
	\relax
}{
	\usepackage{cite}
}
\makeatother


\usepackage{array}
\newcolumntype{L}[1]{>{\raggedright\let\newline\\\arraybackslash\hspace{0pt}}m{#1}}
\newcolumntype{C}[1]{>{\centering\let\newline\\\arraybackslash\hspace{0pt}}m{#1}}
\newcolumntype{R}[1]{>{\raggedleft\let\newline\\\arraybackslash\hspace{0pt}}m{#1}}

\makeatletter
\let\MYcaption\@makecaption
\makeatother
\usepackage[font=footnotesize]{subcaption}
\makeatletter
\let\@makecaption\MYcaption
\makeatother

\usepackage{glossaries}
\makeatletter
\sfcode`\.1006

\let\oldgls\gls
\let\oldglspl\glspl

\newcommand\fussy@ifnextchar[3]{%
	\let\reserved@d=#1%
	\def\reserved@a{#2}%
	\def\reserved@b{#3}%
	\futurelet\@let@token\fussy@ifnch}
\def\fussy@ifnch{%
	\ifx\@let@token\reserved@d
		\let\reserved@c\reserved@a
	\else
		\let\reserved@c\reserved@b
	\fi
	\reserved@c}

\renewcommand{\gls}[1]{%
\oldgls{#1}\fussy@ifnextchar.{\@checkperiod}{\@}}
\renewcommand{\glspl}[1]{%
\oldglspl{#1}\fussy@ifnextchar.{\@checkperiod}{\@}}

\newcommand{\@checkperiod}[1]{%
	\ifnum\sfcode`\.=\spacefactor\else#1\fi
}

\robustify{\gls}
\robustify{\glspl}
\makeatother

\newacronym{wrt}{w.r.t.}{with respect to}
\newacronym{RHS}{R.H.S.}{right-hand side}
\newacronym{LHS}{L.H.S.}{left-hand side}
\newacronym{iid}{i.i.d.}{independent and identically distributed}
\newacronym{SOTA}{SOTA}{state-of-the-art}

\usepackage{float}


\usepackage[capitalize]{cleveref}
\crefname{equation}{}{}
\Crefname{equation}{}{}
\crefname{claim}{claim}{claims}
\crefname{step}{step}{steps}
\crefname{line}{line}{lines}
\crefname{condition}{condition}{conditions}
\crefname{dmath}{}{}
\crefname{dseries}{}{}
\crefname{dgroup}{}{}

\crefname{Problem}{Problem}{Problems}
\crefformat{Problem}{Problem~#2#1#3}
\crefrangeformat{Problem}{Problems~#3#1#4 to~#5#2#6}

\crefname{Theorem}{Theorem}{Theorems}
\crefname{Corollary}{Corollary}{Corollaries}
\crefname{Proposition}{Proposition}{Propositions}
\crefname{Lemma}{Lemma}{Lemmas}
\crefname{Definition}{Definition}{Definitions}
\crefname{Example}{Example}{Examples}
\crefname{Assumption}{Assumption}{Assumptions}
\crefname{Remark}{Remark}{Remarks}
\crefname{Rem}{Remark}{Remarks}
\crefname{remarks}{Remarks}{Remarks}
\crefname{Appendix}{Appendix}{Appendices}
\crefname{Supplement}{Supplement}{Supplements}
\crefname{Exercise}{Exercise}{Exercises}
\crefname{Theorem_A}{Theorem}{Theorems}
\crefname{Corollary_A}{Corollary}{Corollaries}
\crefname{Proposition_A}{Proposition}{Propositions}
\crefname{Lemma_A}{Lemma}{Lemmas}
\crefname{Definition_A}{Definition}{Definitions}

\usepackage{crossreftools}


\usepackage{algorithm}
\usepackage{algpseudocode}



\interdisplaylinepenalty=2500   



\makeatletter
\def\cleartheorem#1{%
    \expandafter\let\csname#1\endcsname\relax
    \expandafter\let\csname c@#1\endcsname\relax
}
\def\clearthms#1{ \@for\tname:=#1\do{\cleartheorem\tname} }
\makeatother

\ifx\renewtheorem\undefined
	\ifx\useTheoremCounter\undefined
		\newtheorem{Theorem}{Theorem}
		\newtheorem{Corollary}{Corollary}
		\newtheorem{Proposition}{Proposition}
		\newtheorem{Lemma}{Lemma}
	\else
		\newtheorem{Theorem}{Theorem}
		
		\newtheorem{Proposition}[Theorem]{Proposition}
	\fi

	\newtheorem{Definition}{Definition}


\fi

\theoremstyle{remark}

\theoremstyle{plain}




\newcommand{\qednew}{\nobreak \ifvmode \relax \else
		\ifdim\lastskip<1.5em \hskip-\lastskip
			\hskip1.5em plus0em minus0.5em \fi \nobreak
		\vrule height0.75em width0.5em depth0.25em\fi}

\makeatletter

\makeatother

\newcommand{\nn}{\nonumber\\ }

\NewDocumentCommand{\movedownsub}{e{^_}}{%
	\IfNoValueTF{#1}{%
		\IfNoValueF{#2}{^{}}
	}{%
		^{#1}
	}%
	\IfNoValueF{#2}{_{#2}}
}

\let\latexchi\chi
\RenewDocumentCommand{\chi}{}{\latexchi\movedownsub}

\newcommand{\Real}{\mathbb{R}}



\newcommand{\calF}{\mathcal{F}}

\newcommand{\calL}{\mathcal{L}}

\newcommand{\calN}{\mathcal{N}}

\newcommand{\calU}{\mathcal{U}}
\newcommand{\calV}{\mathcal{V}}


\newcommand{\bc}{\mathbf{c}}

\newcommand{\be}{\mathbf{e}}
\newcommand{\bE}{\mathbf{E}}
\newcommand{\boldf}{\mathbf{f}}
\newcommand{\bF}{\mathbf{F}}

\newcommand{\bH}{\mathbf{H}}

\newcommand{\bI}{\mathbf{I}}

\newcommand{\bk}{\mathbf{k}}

\newcommand{\bP}{\mathbf{P}}
\newcommand{\bq}{\mathbf{q}}
\newcommand{\bQ}{\mathbf{Q}}

\newcommand{\bR}{\mathbf{R}}
\newcommand{\bs}{\mathbf{s}}

\newcommand{\bt}{\mathbf{t}}

\newcommand{\bu}{\mathbf{u}}
\newcommand{\bU}{\mathbf{U}}
\newcommand{\bv}{\mathbf{v}}
\newcommand{\bV}{\mathbf{V}}

\newcommand{\bW}{\mathbf{W}}

\newcommand{\bX}{\mathbf{X}}

\newcommand{\bY}{\mathbf{Y}}

\newcommand{\bZ}{\mathbf{Z}}




\DeclareSymbolFont{bsfletters}{OT1}{cmss}{bx}{n}
\DeclareSymbolFont{ssfletters}{OT1}{cmss}{m}{n}
\DeclareMathSymbol{\bsfGamma}{0}{bsfletters}{'000}
\DeclareMathSymbol{\ssfGamma}{0}{ssfletters}{'000}
\DeclareMathSymbol{\bsfDelta}{0}{bsfletters}{'001}
\DeclareMathSymbol{\ssfDelta}{0}{ssfletters}{'001}
\DeclareMathSymbol{\bsfTheta}{0}{bsfletters}{'002}
\DeclareMathSymbol{\ssfTheta}{0}{ssfletters}{'002}
\DeclareMathSymbol{\bsfLambda}{0}{bsfletters}{'003}
\DeclareMathSymbol{\ssfLambda}{0}{ssfletters}{'003}
\DeclareMathSymbol{\bsfXi}{0}{bsfletters}{'004}
\DeclareMathSymbol{\ssfXi}{0}{ssfletters}{'004}
\DeclareMathSymbol{\bsfPi}{0}{bsfletters}{'005}
\DeclareMathSymbol{\ssfPi}{0}{ssfletters}{'005}
\DeclareMathSymbol{\bsfSigma}{0}{bsfletters}{'006}
\DeclareMathSymbol{\ssfSigma}{0}{ssfletters}{'006}
\DeclareMathSymbol{\bsfUpsilon}{0}{bsfletters}{'007}
\DeclareMathSymbol{\ssfUpsilon}{0}{ssfletters}{'007}
\DeclareMathSymbol{\bsfPhi}{0}{bsfletters}{'010}
\DeclareMathSymbol{\ssfPhi}{0}{ssfletters}{'010}
\DeclareMathSymbol{\bsfPsi}{0}{bsfletters}{'011}
\DeclareMathSymbol{\ssfPsi}{0}{ssfletters}{'011}
\DeclareMathSymbol{\bsfOmega}{0}{bsfletters}{'012}
\DeclareMathSymbol{\ssfOmega}{0}{ssfletters}{'012}


\newcommand{\btheta}{\bm{\theta}}

\makeatletter
\newcommand*\rel@kern[1]{\kern#1\dimexpr\macc@kerna}
\newcommand*\widebar[1]{%
  \begingroup
  \def\mathaccent##1##2{%
    \rel@kern{0.8}%
    \overline{\rel@kern{-0.8}\macc@nucleus\rel@kern{0.2}}%
    \rel@kern{-0.2}%
  }%
  \macc@depth\@ne
  \let\math@bgroup\@empty \let\math@egroup\macc@set@skewchar
  \mathsurround\z@ \frozen@everymath{\mathgroup\macc@group\relax}%
  \macc@set@skewchar\relax
  \let\mathaccentV\macc@nested@a
  \macc@nested@a\relax111{#1}%
  \endgroup
}
\makeatother


\DeclareMathOperator*{\argmin}{arg\,min}

\DeclareMathOperator{\var}{var}

\DeclareMathOperator{\cov}{cov}

\DeclareMathOperator*{\concat}{\scalerel*{\parallel}{\sum}}

\newcommand{\ifbcdot}[1]{\ifblank{#1}{\cdot}{#1}}

\DeclarePairedDelimiterX\abs[1]{\lvert}{\rvert}{\ifbcdot{#1}}
\DeclarePairedDelimiterX\parens[1]{(}{)}{\ifbcdot{#1}}
\DeclarePairedDelimiterX\brk[1]{[}{]}{\ifbcdot{#1}}
\DeclarePairedDelimiterX\braces[1]{\{}{\}}{\ifbcdot{#1}}
\DeclarePairedDelimiterX\angles[1]{\langle}{\rangle}{\ifblank{#1}{\cdot,\cdot}{#1}}
\DeclarePairedDelimiterX\ip[2]{\langle}{\rangle}{\ifbcdot{#1},\ifbcdot{#2}}
\DeclarePairedDelimiterX\norm[1]{\lVert}{\rVert}{\ifbcdot{#1}}
\DeclarePairedDelimiterX\ceil[1]{\lceil}{\rceil}{\ifbcdot{#1}}
\DeclarePairedDelimiterX\floor[1]{\lfloor}{\rfloor}{\ifbcdot{#1}}

\DeclareFontFamily{U}{matha}{\hyphenchar\font45}
\DeclareFontShape{U}{matha}{m}{n}{
      <5> <6> <7> <8> <9> <10> gen * matha
      <10.95> matha10 <12> <14.4> <17.28> <20.74> <24.88> matha12
      }{}
\DeclareSymbolFont{matha}{U}{matha}{m}{n}
\DeclareFontSubstitution{U}{matha}{m}{n}

\DeclareFontFamily{U}{mathx}{\hyphenchar\font45}
\DeclareFontShape{U}{mathx}{m}{n}{
      <5> <6> <7> <8> <9> <10>
      <10.95> <12> <14.4> <17.28> <20.74> <24.88>
      mathx10
      }{}
\DeclareSymbolFont{mathx}{U}{mathx}{m}{n}
\DeclareFontSubstitution{U}{mathx}{m}{n}

\DeclareMathDelimiter{\vvvert}{0}{matha}{"7E}{mathx}{"17}
\DeclarePairedDelimiterX\vertiii[1]{\vvvert}{\vvvert}{\ifbcdot{#1}}

\DeclarePairedDelimiterXPP\trace[1]{\operatorname{Tr}}{(}{)}{}{\ifbcdot{#1}} 
\DeclarePairedDelimiterXPP\col[1]{\operatorname{col}}{\{}{\}}{}{\ifbcdot{#1}} 
\DeclarePairedDelimiterXPP\row[1]{\operatorname{row}}{\{}{\}}{}{\ifbcdot{#1}} 
\DeclarePairedDelimiterXPP\erf[1]{\operatorname{erf}}{(}{)}{}{\ifbcdot{#1}}
\DeclarePairedDelimiterXPP\erfc[1]{\operatorname{erfc}}{(}{)}{}{\ifbcdot{#1}}
\DeclarePairedDelimiterXPP\KLD[2]{D}{(}{)}{}{\ifbcdot{#1}\, \delimsize\|\, \ifbcdot{#2}} 
\DeclarePairedDelimiterXPP\op[2]{\operatorname{#1}}{(}{)}{}{#2} 


\newcommand{\T}{^{\mkern-1.5mu\mathop\intercal}}

\DeclarePairedDelimiterXPP\indicate[1]{{\bf 1}}{\{}{\}}{}{\ifbcdot{#1}}

\NewDocumentCommand\ofrac{s m}{%
	\IfBooleanTF#1%
	{\dfrac{1}{#2}}%
	{\frac{1}{#2}}%
}
\NewDocumentCommand\ddfrac{s m m}{%
	\IfBooleanTF#1%
	{\dfrac{\mathrm{d} {#2}}{\mathrm{d} {#3}}}%
	{\frac{\mathrm{d} {#2}}{\mathrm{d} {#3}}}%
}
\NewDocumentCommand\ppfrac{s m m}{%
	\IfBooleanTF#1%
	{\dfrac{\partial {#2}}{\partial {#3}}}%
	{\frac{\partial {#2}}{\partial {#3}}}%
}

\providecommand\given{}

\DeclarePairedDelimiterX\Set[2]\{\}{%
\renewcommand\given{\SetSymbol[\delimsize]{#1}}
#2
}
\DeclarePairedDelimiterX\Setc[1]\{\}{%
\renewcommand\given{\SetSymbol{:}}
#1
}

\NewDocumentCommand\set{s o m}{%
	\IfBooleanTF#1%
	{\IfValueTF{#2}{\Set*{#2}{#3}}{\Setc*{#3}}}%
	{\IfValueTF{#2}{\Set{#2}{#3}}{\Setc{#3}}}%
}


\NewDocumentCommand{\evalat}{ s O{\big} m e{_^} }{%
\IfBooleanTF{#1}%
{\left. #3 \right|}{#3#2|}%
\IfValueT{#4}{_{#4}}%
\IfValueT{#5}{^{#5}}%
}


\providecommand\given{}
\DeclarePairedDelimiterXPP\cprob[1]{}(){}{
\renewcommand\given{\nonscript\,\delimsize\vert\allowbreak\nonscript\,\mathopen{}}%
#1%
}
\DeclarePairedDelimiterXPP\cexp[1]{}[]{}{
\renewcommand\given{\nonscript\,\delimsize\vert\allowbreak\nonscript\,\mathopen{}}%
#1%
}

\DeclareDocumentCommand \P { s e{_^} d() g } {%
	\mathbb{P}%
	\IfBooleanTF{#1}%
		{
			\IfValueT{#2}{_{#2}}%
			\IfValueT{#3}{^{#3}}%
			\IfValueTF{#5}{\cprob{#4 \given #5}}{\IfValueT{#4}{\cprob{#4}}}%
		}%
		{
			\IfValueT{#2}{_{#2}}%
			\IfValueT{#3}{^{#3}}%
			\IfValueTF{#5}{\cprob*{#4 \given #5}}{\IfValueT{#4}{\cprob*{#4}}}%
		}%
}

\DeclareDocumentCommand \E { s e{_^} o g } {%
	\mathbb{E}%
	\IfBooleanTF{#1}%
		{
			\IfValueT{#2}{_{#2}}%
			\IfValueT{#3}{^{#3}}%
			\IfValueTF{#5}{\cexp{#4 \given #5}}{\IfValueT{#4}{\cexp{#4}}}%
		}%
		{
			\IfValueT{#2}{_{#2}}%
			\IfValueT{#3}{^{#3}}%
			\IfValueTF{#5}{\cexp*{#4 \given #5}}{\IfValueT{#4}{\cexp*{#4}}}%
		}%
}

\DeclareDocumentCommand \Var { s e{_^} d() g } {%
	\var%
	\IfBooleanTF{#1}%
		{
			\IfValueT{#2}{_{#2}}%
			\IfValueT{#3}{^{#3}}%
			\IfValueTF{#5}{\cprob{#4 \given #5}}{\IfValueT{#4}{\cprob{#4}}}%
		}%
		{
			\IfValueT{#2}{_{#2}}%
			\IfValueT{#3}{^{#3}}%
			\IfValueTF{#5}{\cprob*{#4 \given #5}}{\IfValueT{#4}{\cprob*{#4}}}%
		}%
}

\DeclareDocumentCommand \Cov { s e{_^} d() g } {%
	\cov%
	\IfBooleanTF{#1}%
		{
			\IfValueT{#2}{_{#2}}%
			\IfValueT{#3}{^{#3}}%
			\IfValueTF{#5}{\cprob{#4 \given #5}}{\IfValueT{#4}{\cprob{#4}}}%
		}%
		{
			\IfValueT{#2}{_{#2}}%
			\IfValueT{#3}{^{#3}}%
			\IfValueTF{#5}{\cprob*{#4 \given #5}}{\IfValueT{#4}{\cprob*{#4}}}%
		}%
}

\ExplSyntaxOn
\NewDocumentCommand \dist {m o o} {%
\mathrm{#1}\left(%
	\IfValueT{#3}{%
		\tl_if_blank:nTF{ #3 }{\cdot\, \middle|\, }{#3\, \middle|\, }%
	}
	\IfValueT{#2}{#2}%
\right)%
}
\ExplSyntaxOff


\NewDocumentCommand {\cbrace} {t+ D[]{black} D(){\widthof{#5}} m m } {%
	\begingroup%
		\color{#2}
		\IfBooleanTF{#1}{%
			\overbrace{#4}^%
		}{
			\underbrace{#4}_%
		}%
		{\parbox[c]{#3}{\centering\footnotesize{#5}}}%
	\endgroup%
}

\let\oldforall\forall
\renewcommand{\forall}{\oldforall \, }

\let\oldexist\exists
\renewcommand{\exists}{\oldexist \, }

\makeatletter

\newcommand{\rankcolor}[2]{%
	\expandafter\renewcommand\csname #1\endcsname[1]{%
		\ifblank{##1}{%
			{\color{#2} \textbf{#2}}%
		}{%
			\ifmmode
				\textcolor{#2}{\bm{##1}}%
			\else%
				{\color{#2} \textbf{##1}}%
			\fi	
		}%
	}
}

\rankcolor{first}{red}
\rankcolor{second}{blue}
\rankcolor{third}{cyan}
\makeatother


\graphicspath{{./Figures/}{./figures/}}
\pdfsuppresswarningpagegroup=1

\DeclareDocumentCommand{\includeCroppedPdf}{ o O{./Figures/} m }{
	\IfFileExists{#2#3-crop.pdf}{}{%
		\immediate\write18{pdfcrop #2#3.pdf #2#3-crop.pdf}}%
	\includegraphics[#1]{#2#3-crop.pdf}
}



\makeatletter
\newcommand*{\addFileDependency}[1]{
  \typeout{(#1)}
  \@addtofilelist{#1}
  \IfFileExists{#1}{}{\typeout{No file #1.}}
}
\makeatother

\definecolor{gray90}{gray}{0.9}
\def\colorlist{red,blue,brown,cyan,darkgray,gray,lightgray,green,lime,magenta,olive,orange,pink,purple,teal,violet,white,yellow}

\makeatletter
\def\startcomment{[}
\ifx\nohighlights\undefined
	\newcommand{\createcolor}[1]{%
			\expandafter\newcommand\csname #1\endcsname[1]{{\color{#1} ##1}}%
	}
	\newcommand{\msout}[1]{\text{\color{green} \sout{\ensuremath{#1}}}}
	\newcommand{\del}[1]{{\color{green}\ifmmode \msout{#1}\else\sout{#1}\fi}}
\else
	\newcommand{\createcolor}[1]{%
			\expandafter\newcommand\csname #1\endcsname[1]{%
				\noexpandarg%
				\StrChar{##1}{1}[\firstletter]%
				\if\firstletter\startcomment%
					\relax
				\else%
					##1
				\fi
			}%
	}
	\newcommand{\msout}[1]{}
	\newcommand{\del}[1]{}
\fi

\def\@tempa#1,{%
    \ifx\relax#1\relax\else
        \createcolor{#1}%
        \expandafter\@tempa
    \fi
}
\expandafter\@tempa\colorlist,\relax,
\makeatother

\newcommand{\hhide}[1]{}


\ifx\diagnoselabel\undefined
	\relax
\else
	\makeatletter
	\def\@testdef #1#2#3{%
		\def\reserved@a{#3}\expandafter \ifx \csname #1@#2\endcsname
			\reserved@a  \else
			\typeout{^^Jlabel #2 changed:^^J%
				\meaning\reserved@a^^J%
				\expandafter\meaning\csname #1@#2\endcsname^^J}%
			\@tempswatrue \fi}
	\makeatother
\fi


\usepackage{graphicx}
\usepackage{multirow, threeparttable, booktabs, makecell}
\usepackage{amssymb}

\newcommand{\tb}[1]{\textbf{#1}}



\setcounter{secnumdepth}{0} 

%


\title{PosDiffNet: Positional Neural Diffusion for Point Cloud Registration in a Large Field of View with Perturbations}
\author {
    Rui She\textsuperscript{\rm 1}\equalcontrib,
    Sijie Wang\textsuperscript{\rm 1}\equalcontrib,
    Qiyu Kang\textsuperscript{\rm 1}\equalcontrib\footnote{Corresponding author: Qiyu Kang.},
    Kai Zhao\textsuperscript{\rm 1}, 
    Yang Song\textsuperscript{\rm 1}, \\ 
    Wee Peng Tay\textsuperscript{\rm 1}, 
    Tianyu Geng\textsuperscript{\rm 1}, 
    Xingchao Jian\textsuperscript{\rm 1}
}
\affiliations {
    \textsuperscript{\rm 1}Nanyang Technological University, Singapore \\
    \{rui.she@, wang1679@e., qiyu.kang@, kai.zhao@, wptay@, tianyu.geng@, xingchao001@e.\}ntu.edu.sg, 
    yang.song@connect.polyu.hk
}

\usepackage{bibentry}

\begin{document}

\maketitle

\begin{abstract}
Point cloud registration is a crucial technique in 3D computer vision with a wide range of applications. However, this task can be challenging, particularly in large fields of view with dynamic objects, environmental noise, or other perturbations. To address this challenge, we propose a model called \textit{PosDiffNet}. 
Our approach performs hierarchical registration based on window-level, patch-level, and point-level correspondence. 
We leverage a graph neural partial differential equation (PDE) based on Beltrami flow to obtain high-dimensional features and position embeddings for point clouds. 
We incorporate position embeddings into a Transformer module based on a neural ordinary differential equation (ODE) to efficiently represent patches within points. 
We employ the multi-level correspondence derived from the high feature similarity scores to facilitate alignment between point clouds.
Subsequently, we use registration methods such as SVD-based algorithms to predict the transformation using corresponding point pairs. 
We evaluate PosDiffNet on several 3D point cloud datasets, verifying that it achieves state-of-the-art (SOTA) performance for point cloud registration in large fields of view with perturbations. 
The implementation code of experiments is available at \url{https://github.com/AI-IT-AVs/PosDiffNet}.
\end{abstract}

\section{Introduction}

Three-dimensional (3D) computer vision techniques recently have gained increasing popularity in various fields such as autonomous driving \cite{Wang2023HypLiLoc}, robotics \cite{li2021robotnavigation}, and scene modeling \cite{Kang22itsc}. Point cloud registration, which estimates the transformation or relative pose between two given 3D point cloud frames \cite{wang2019deep}, is a crucial task 
in many applications, such as object detection, odometry estimation, as well as simultaneous localization and mapping (SLAM) \cite{shan2020lio, Kang22itsc}, owing to its robustness against seasonal changes and illumination variations.

Iterative methods, as demonstrated by the iterative closest point (ICP) algorithm \cite{besl1992method,segal2009generalized}, have become widely employed in point cloud registration. 
Despite their utility, these methods face obstacles. Specifically, the non-convexity of the optimization problem poses a significant challenge to the attainment of a globally optimal solution \cite{wang2019deep}.
When dealing with sparse and non-uniform data, traditional methods like nearest-neighbor search may not be effective, resulting in higher registration errors \cite{wang2019deep, wei2020end}. 

To address the aforementioned challenges in point cloud registration, deep learning-based methods have been investigated to predict transformation matrices or relative poses \cite{choy2019fully, bai2020d3feat, ao2021spinnet}. 
However, achieving robust point cloud registration in large-scale scenarios remains a significant challenge due to LiDAR scan distortion and sparsity. For instance, real outdoor datasets often exhibit numerous perturbations among different frames, such as dynamic objects and environmental noise \cite{yu2019advanced}. Thus, an open question is how to efficiently estimate the transformation for large-scale scenarios with perturbations, especially in real outdoor datasets. 

In this paper, we propose a model for point cloud registration based on neural diffusion. 
Considering the demonstrated capability of Beltrami flow in preserving non-smooth graph signals and its robustness in feature representation \cite{song2022robustness}, we utilize feature descriptors and position embeddings based on graph neural diffusion with Beltrami flow \cite{kimmel1997high, chamberlain2021blend}. 
We also present a transformation estimation method using a diffusion-based Transformer. Our approach mitigates the challenges from dynamic object non-correspondence and random perturbations in large fields of view, leading to robust and efficient point cloud registration.
Our main contributions are as follows:

$\bullet$ We design a 3D point cloud representation module using graph neural diffusion based on Beltrami flow, from which point feature embedding and position embedding are both outputs. 

$\bullet$  We propose a point cloud registration method based on the window-patch-point matching and a Transformer, which incorporates neural ODE modules and leverages point features and their positional information. 

$\bullet$  We empirically evaluate our point cloud registration method to outperform other baselines in several real datasets in the large field of view with perturbations.  

\begin{figure*}[!hbt]
\centering
\includegraphics[width=\textwidth]{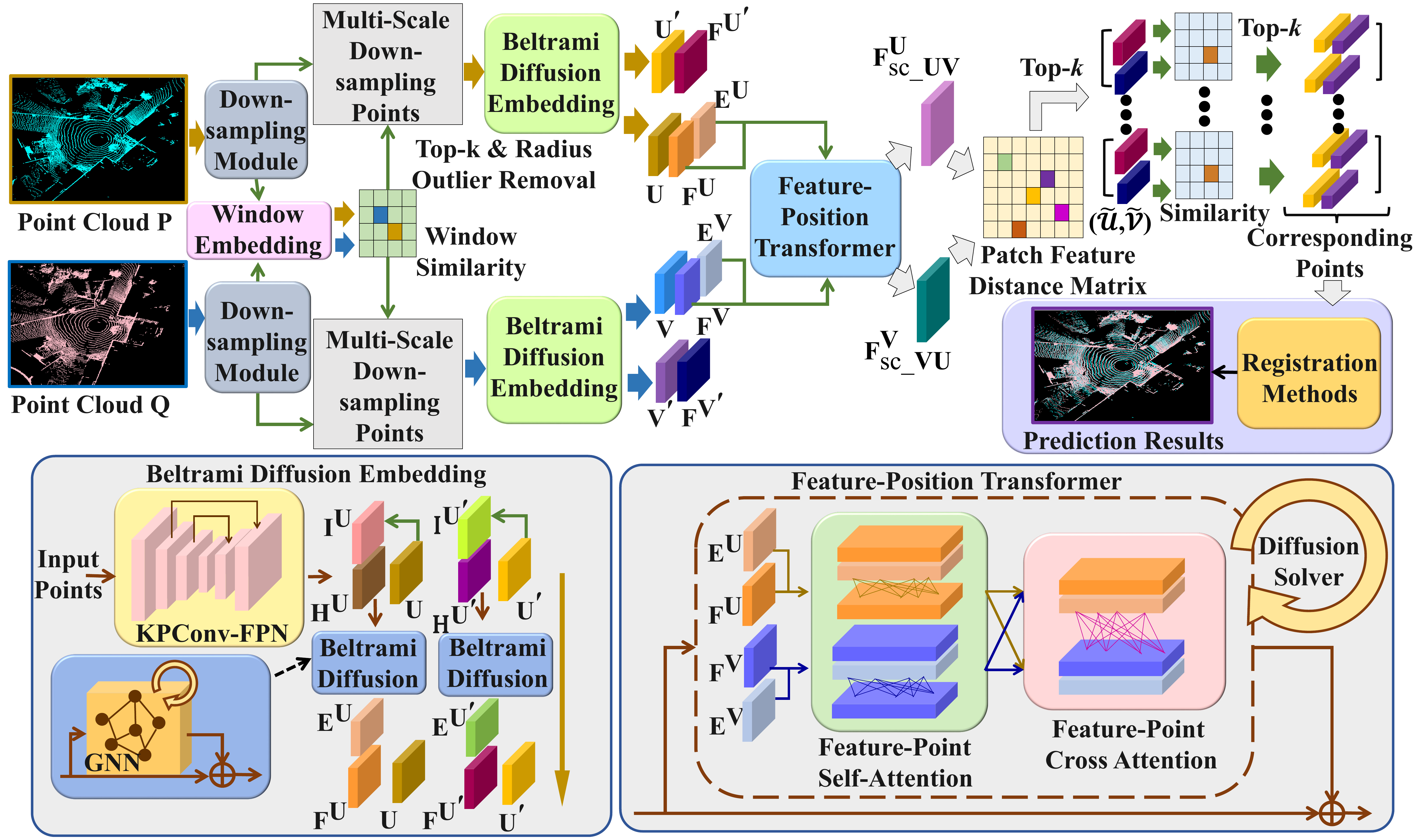}
\caption{The architecture of our PosDiffNet for the registration task \gls{wrt} point cloud pairs. Detailed information about the modules can be found in the subsequent subsections of Methodology.}
\label{fig:model_PosDiffNet} 
\end{figure*}

\section{Related Work}
\tb{Point Cloud Registration Methods.}
As classical approaches, the ICP \cite{besl1992method} and the random sample consensus (RANSAC) \cite{fischler1981random} are widely used for point cloud registration. 
RANSAC requires more computing resources and higher running time complexity due to its low convergence. ICP's performance mainly depends on the selection of the initial value. 
A series of methods to refine ICP have been proposed \cite{guerout2017mixed,koide2021voxelized} to improve the acceleration and accuracy. 
Correspondence-based estimators are used for point cloud registration. 
One type of method performs repeatable keypoint detection \cite{bai2020d3feat, huang2021predator} and then learns the keypoint descriptor for the correspondence acquisition \cite{choy2019fully, ao2021spinnet} or similarity measures to obtain the correspondences \cite{quan2020compatibility, chen2022sc2}, such as D3Feat \cite{bai2020d3feat}, SpinNet \cite{ao2021spinnet}, PREDATOR \cite{huang2021predator} and $\text{SC}^{2}$-PCR \cite{chen2022sc2}. 
The other, such as deep closest point (DCP) \cite{wang2019deep}, CoFiNet \cite{yu2021cofinet} and UDPReg \cite{mei2023unsupervised}, performs the correspondence retrieval for all possible matching point pairs without the keypoint detection. 
Additionally, auxiliary modules can be integrated into learning-based estimators, such as SuperLine3D \cite{zhao2022superline3d} and Maximal Cliques (MAC) \cite{zhang20233d}. 

In order to achieve more robust non-handcrafted estimators, learning-based methods are introduced into the transformation prediction \cite{qin2022geometric}.
Since conventional estimators like RANSAC have drawbacks in terms of convergence speed and are unstable in the presence of numerous outliers, learning-based estimators \cite{lu2021hregnet, poiesi2022learning, pais20203dregnet}, such as StickyPillars \cite{fischer2021stickypillars}, PointDSC \cite{bai2021pointdsc}, EDFNet \cite{zhang2022learning}, GeoTransformer (GeoTrans) \cite{qin2022geometric}, Lepard \cite{li2022lepard}, BUFFER \cite{ao2023buffer}, RoITr \cite{yu2023rotation} and RoReg \cite{wang2023roreg}, have attracted much interest. 

\tb{Point Cloud Feature Representation.}
In general, there are three categories of 3D feature representation methods. 
In the first category, voxel alignment is initially performed on the points, followed by the extraction of corresponding features through a 3D convolutional neural network (CNN) \cite{sindagi2019mvx, kopuklu2019resource, kumawat2019lp}. 
However, it is worth noting that this approach exhibits a long running time.
The second category focuses on the reduction of a 3D point cloud into a 2D map, subsequently leveraging classical 2D CNN techniques for feature extraction \cite{su2015multi}. 
Nevertheless, this approach may introduce unforeseen noise artifacts, which can impact the quality of the extracted features.
The third category is to extract features from the raw point clouds directly using specific neural networks, such as PointNet \cite{qi2017pointnet}, dynamic graph convolutional neural networks (DGCNN) \cite{wang2019dynamic}, point cloud transformer (PCT) \cite{guo2021pct}, GdDi \cite{poiesi2022learning}, PointMLP \cite{ma2022pointmlp}, and PointNeXt \cite{qian2022pointnext}. 

\tb{Beltrami Neural Diffusion.}
Beltrami flow is a partial differential equation widely used in signal processing \cite{kimmel1997high, chamberlain2021blend, zhao2023adversarial}. 
A Beltrami diffusion on the graph is defined \cite{song2022robustness} as
\begin{align}
\frac{\partial \bZ(\mu, t)}{\partial t} = \frac{1}{2} \frac{1}{\norm{\nabla \bZ}} \mathrm{div} \parens*{ \frac{\nabla \bZ}{\norm{\nabla \bZ}}} (\mu, t), \label{eq.beltrami_diffusion}
\end{align}
where $\mathrm{div}$ denotes the divergence, $\nabla$ denotes the gradient operator, $\norm{}$ is a norm operator, vertex feature $\bZ(\cdot, t)$ satisfies $\bZ(\mu, t)=(\bX(\mu, t), \bY(\mu, t))$ and $\mu$ is the index of vertices, 
$(\bX(\mu, t), \bY(\mu, t))$ denotes a pair of vertex features and positional features at the vertex with the index $\mu$. 
To combine the Beltrami flow and graph neural diffusion, a Beltrami neural diffusion \cite{chamberlain2021blend} is presented as  
\begin{align}
\Big[ \frac{{\rm d} \mathbf{X}(t)}{{\rm d} t}, \frac{{\rm d} \mathbf{Y}(t) }{{\rm d} t} \Big] 
& =(\mathbf{A_{B}}(\mathbf{X}(t), \mathbf{Y}(t))-\mathbf{I})[\mathbf{X}(t), \mathbf{Y}(t)], \label{eq.beltrami_GNN}\\
\mathbf{X}(0) & =\mathbf{X} ; \mathbf{Y}(0)=\alpha \mathbf{Y} ; t \geq 0, 
\end{align}
where $\mathbf{X}$ and $\mathbf{Y}$ denote the vertex feature and positional feature in a graph, respectively.
$\alpha > 0$  is a scaling factor and $\mathbf{A_{B}}(\cdot, \cdot)$ is the learnable matrix-valued function.  
From \cite{chamberlain2021grand, she2023robustmat}, most graph neural networks (GNNs) can be regarded as partial differential diffusions using different discretization, which leads that \cref{eq.beltrami_GNN} can be viewed as a neural partial differential equation (PDE).

As an advantage of neural diffusions with Beltrami flow, the robustness of feature representation for vertices is improved using both vertex features and positional features \cite{song2022robustness,chamberlain2021blend}. 
From \cref{eq.beltrami_diffusion}, since there exists a term of $\frac{1}{\norm{\nabla \bZ}}$ when the gradient is large, the feature updates slowly. This benefits the shape description for the structure of vertices \cite{song2022robustness}.  
Due to the advantages of Beltrami diffusion, this process can smooth out the noise and enhance the shape features of the input. 

\section{Methodology}\label{sect:Methodology}

\subsection{Problem Formulation}
Consider two point clouds, $\bP = \{P_i\}$ and $\bQ= \{Q_j\}$, which are subsets of $\Real^3$. We first employ a neural-diffusion-based mapping function $f$ to embed each point (or a subset of points) into a $d$-dimensional feature space, $\Real^d$. The intention behind this process is to leverage the similarities between the embeddings derived from the two point clouds for identifying matched or corresponding points. Subsequently, we anticipate predicting the rotation $\hat \bR$ and translation $\hat \bt$ that correspond to the ground-truth rotation $\bR$ and translation $\bt$.
For the point cloud registration task, our objective function is naturally defined as:
$
{\hat \bR}^*, {\hat \bt}^* = \argmin_{\hat \bR, \hat \bt} \ell_{\mathrm{D}}((\hat \bR, \hat \bt), (\bR, \bt)), 
$
where $\ell_{\mathrm{D}}$ represents a metric.
Alternatively, this can also be represented as
\begin{align}\label{eq.optimization_Rt_point}
{\hat \bR}^*, {\hat \bt}^* = \argmin_{\hat \bR, \hat \bt} \ell_{\mathrm{loss}}( \pi (\bP^{\mathrm{co}},  (\hat \bR, \hat \bt)), \bQ^{\mathrm{co}}).
\end{align}
In this context, the point-level matching $(\bP^{\mathrm{co}}, \bQ^{\mathrm{co}}) = (\{P_l\}_{l \in \calL}, \{Q_l\}_{l \in \calL})$ is established, where $P_l$ corresponds to $Q_l$, and $\calL$ is the index set of corresponding points. 
$\pi(\cdot, \cdot)$ denotes the transformation operation. 
$\ell_{\mathrm{loss}}$ is a loss function such as mean squared error (MSE).
To solve the proposed problem, we design a novel model, integrating neural diffusion. The architecture of this model is illustrated in \cref{fig:model_PosDiffNet}. 

In \cref{fig:model_PosDiffNet}, the down-sampling module with multi-scale voxel sizes is the same as that in \cite{yu2021cofinet} for obtaining window-level and patch-level central points, denoted as $(\bU_{\mathrm{win}}, \bV_{\mathrm{win}})$ and $(\bU, \bV)$ respectively.
Every window encompasses patches whose central points are within it, with each patch encapsulating points within the same process.
The window feature module consists of the DGCNN from \cite{wang2019dynamic} and the Transformer from \cite{wang2019deep}. 
The Top-$K$ and radius outlier removal methods are applied to filter out outlier windows within patches and points.
Then, we use the remaining patches and points for further registration.

\subsection{Point Cloud Representation with Beltrami Diffusion}
To represent point clouds, we initially extract both point-level and patch-level features utilizing the KPConv-FPN method \cite{qin2022geometric, thomas2019kpconv}. The two feature representations correspond to the downsampled points and patch central points. 
Then, we introduce the feature and position embeddings on the points and the patch central points.

Given a pair of original point clouds, \textit{(i)} we represent these clouds by their patch central points $(\bU, \bV)$, where $\bU \in \Real^{|\bU|\times 3}$ and $\bV \in \Real^{|\bV|\times 3}$. Each patch central point is denoted as $\bu_i$ and $\bv_j$, respectively. The learned patch-level features and position embeddings are represented by $([\bH^{\bU}, \bI^{\bU}], [\bH^{\bV}, \bI^{\bV}])$, where $\bH^{\bU}, \bI^{\bU} \in \Real^{|\bU| \times d}$ and $\bH^{\bV}, \bI^{\bV} \in \Real^{|\bV| \times d}$.
Similarly, \textit{(ii)} a pair of point clouds are denoted as $(\bU', \bV')$. The learned point-level features and position embeddings are represented by $([\bH^{\bU'}, \bI^{\bU'}], [\bH^{\bV'}, \bI^{\bV'}])$, where $\bH^{\bU'}, \bI^{\bU'} \in \Real^{|\bU'| \times d'}$ and $\bH^{\bV'}, \bI^{\bV'} \in \Real^{|\bV'| \times d'}$.
\textit{(iii)} Each patch central point can be associated with its patch consisting of points using a grouping strategy \cite{yu2021cofinet, qin2022geometric, li2018so}. The corresponding patch sets based on $(\bU, \bV)$ are denoted by 
$\calU=\{\calU_i | i=1, \ldots, |\bU|\}$ and $\calV=\{\calV_j | j=1, \ldots, |\bV|\}$, 
where $\calU_i = \{\bu'_{\eta} | \bu'_{\eta} \in \bU', \norm{\bu'_{\eta}-\bu_i} < \Gamma, \eta = 1, \ldots, |\bU'|\}$
and $\calV_j = \{\bv'_{\xi} | \bv'_{\xi} \in \bV', \norm{\bv'_{\xi}-\bv_j} < \Gamma, \xi = 1, \ldots, |\bV'|\}$, and $\Gamma$ is a threshold parameter.

To achieve enhanced robustness in the embeddings of both features and positions, we employ a neural diffusion mechanism rooted in Beltrami flow. This mechanism is specifically applied to each feature and position pair denoted as $[\bH, \bI]$.
Taking $[\bH^{\bU}, \bI^{\bU}]$ as an instance, the Beltrami neural diffusion module is characterized by
\begin{align}
\Big[ \frac{{\rm d} \bH^{\bU}(t)}{{\rm d}t}, \frac{{\rm d} \bI^{\bU}(t)}{{\rm d}t} \Big]
= f_{\mathrm{BND}}( [ \bH^{\bU}(t), \bI^{\bU}(t) ] ), \label{eq:GRAPH_pde_BND}
\end{align}
where ${f_{\mathrm{BND}}}( [\bH^{\bU}(t), \bI^{\bU}(t)]) \in \Real^{ |\bU|\times 2 d}$, $t \in [0, T_f]$, and $f_{\mathrm{BND}}(\cdot)$ denotes a Graph Neural Network (e.g. DGCNN \cite{wang2019dynamic}). This mapping is employed to embed both the point features and positions. In the context of updating the diffusion state, the construction of the neighborhood graph is derived from the $k$ nearest neighbors, based on $\bI^{\bU}(t)$ at time $t$.
The metric used to search the nearest neighbors is the $\mathcal{L}_2$ distance in the Euclidean space of point positions. 
This graph construction facilitates the effective integration of information during the diffusion process.

By integrating the \cref{eq:GRAPH_pde_BND} from $t=0$ to $t=T_f$, we obtain the embeddings for the $[\bH^{\bU}, \bI^{\bU}]$ given by 
\begin{align}
    [\bF^{\bU}, \bE^{\bU} ]= [\bH^{\bU}(T_f), \bI^{\bU}(T_f)] = \calF_{\mathrm{BND}} ([\bH^{\bU}, \bI^{\bU}]), 
\end{align}
where $\calF_{\mathrm{BND}}(\cdot)$ indicates the mapping for the Beltrami neural diffusion by solving \cref{eq:GRAPH_pde_BND}, and $\bF^{\bU}, \bE^{\bU} \in \Real^{ |\bU| \times d}$. Analogously, the embeddings corresponding to the point features and their respective positions, obtained through the Beltrami neural diffusion, are represented by $(\bF^{\bU'}, \bF^{\bV'})$. 
The architecture of the  Beltrami neural diffusion module is shown in \cref{fig:model_Beltrami}. 

\begin{figure}[!hbt]
\centering
\includegraphics[width=0.45\textwidth]{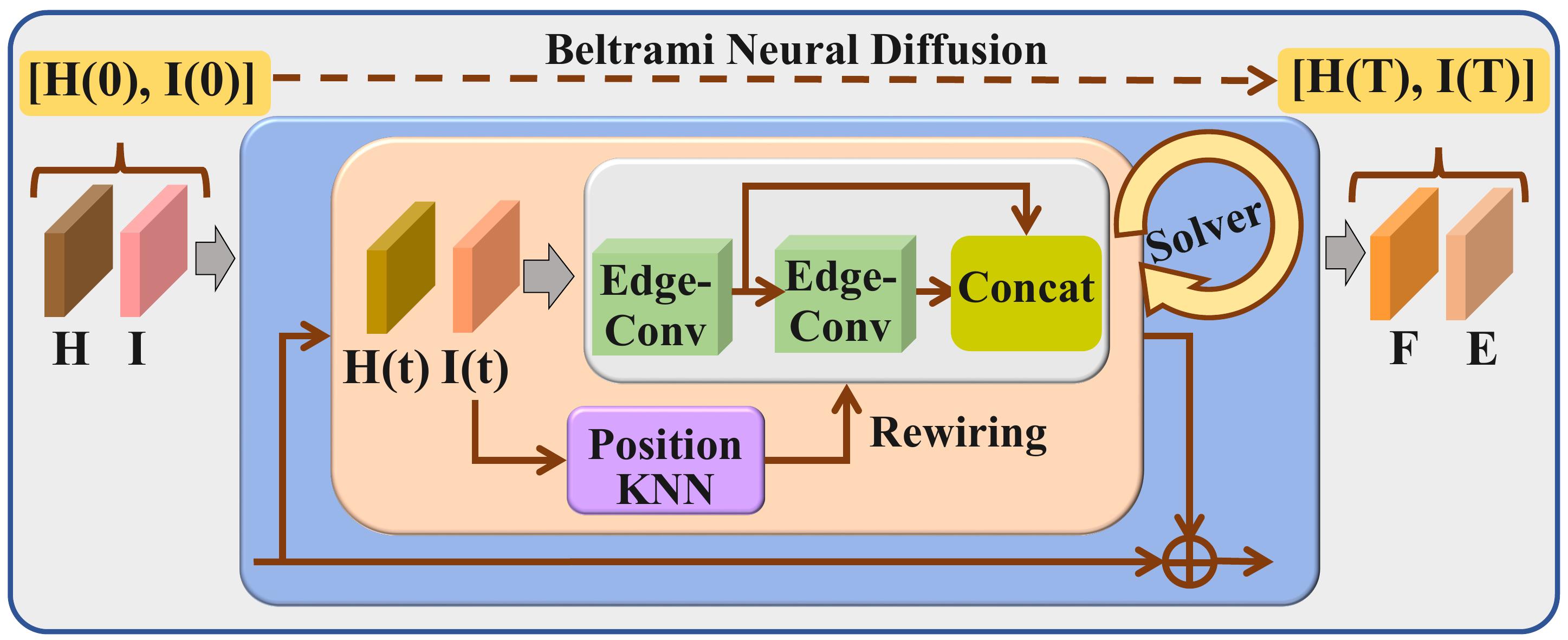}
\caption{Architecture of the Beltrami neural diffusion module for feature and position embeddings.
}
\label{fig:model_Beltrami} 
\end{figure}

\subsection{Feature-Position Transformer with Neural Diffusion}\label{sect:FP-Transformer-ND}

We propose a Transformer module based on neural ODE and the point and position embeddings derived from the Beltrami neural diffusion. 
For a pair of point clouds $(\bU,\bV)$, the input point features and position embeddings utilized as inputs for the Transformer module are represented by $[\bF^{\bU}, \bE^{\bU}]$ and $[\bF^{\bV}, \bE^{\bV}]$, respectively. Leveraging these embeddings, we proceed to elaborate on the self-attention and cross-attention mechanisms within the Transformer module.

\textbf{Feature-Position Self-Attention Mechanism.}
To emphasize the geometric position of each point and augment the richness of point representation, we integrate position embeddings into the self-attention module. For a given point cloud $\bU$, we input the normalized versions of $\bF^{\bU}$ and $\bE^{\bU}$ into the self-attention module. As a result, we obtain the embedding generated by the feature-position self-attention module as follows
\begin{align}
& f_{\mathrm{s\_att}}(\bF^{\bU})
=  \concat\limits_{i=1}^{S_{\mathrm{head}}}
\bigg(
f_{\mathrm{sfx}} \Big( \frac{(\bF^{\bU} \bW^{\mathrm{\bs\bq}}_{i}) (\bF^{\bU} \bW^{\mathrm{\bs\bk}}_{i})\T}{\sqrt{d^{\mathrm{s}}_i}} \nn
& \qquad + \frac{(\bF^{\bU} \bW^{\mathrm{\bs\be\bq}}_{i}) (\bE^{\bU} \bW^{\mathrm{\bs\be\bk}}_{i})\T}{\sqrt{d^{\mathrm{e}}_i}} \Big) ( \bF^{\bU} \bW^{\mathrm{\bs\bv}}_{i})
\bigg) {\bW}^{\mathrm{\bs}}, \label{eq.f_s_att_w}
\end{align}
where $\bW^{\mathrm{\bs\bq}}_{i}$, $\bW^{\mathrm{\bs\bk}}_{i}$, $\bW^{\mathrm{\bs\bv}}_{i}$, $\bW^{\mathrm{\bs\be\bq}}_{i}$, $\bW^{\mathrm{\bs\be\bk}}_{i}$, and ${\bW}^{\mathrm{\bs}}$ are all learnable neural networks for feature embedding. 
$d^{\mathrm{s}}_i$ and $d^{\mathrm{e}}_i$ denote the number of dimensions for point cloud features and position embeddings in the $i$-th attention head.
$(\cdot)\T$ and $\|$ are the transpose operation and the concatenation operation respectively.
$S_{\mathrm{head}}$ denotes the number of heads. $f_{\mathrm{sfx}}(\cdot)$ is the row-wise softmax normalization function. 

Furthermore, we employ the neural network module mentioned in standard Transformer architecture, including linear layers, feed forward networks (FFN), and normalization layers \cite{vaswani2017attention}, as an embedding for $f_{\mathrm{s\_att}}(\bF^{\bU})$ to obtain $f_{\mathrm{s\_ate}}(\bF^{\bU})$. 

\tb{Feature-Position Cross-Attention Mechanism.}
Based on the embeddings from the aforementioned self-attention and position information of points, we design a cross-attention for $\bU$ and $\bV$. 
When inputting the normalized $f_{\mathrm{s\_att}}(\bF^{\bU})$ \gls{wrt} $\bU$ and $f_{\mathrm{s\_ate}}(\bF^{\bV})$ \gls{wrt} $\bV$ into the cross-attention module, we have the corresponding embedding given by 
\begin{align}
& f^{\bU\bV}_{\mathrm{c\_att}}(\bF^{\bU}) \nn
& = \concat\limits_{j=1}^{C_{\mathrm{head}}}
\Bigg( f_{\mathrm{sfx}} \bigg( \frac{(f_{\mathrm{s\_att}}(\bF^{\bU}) \bW^{\mathrm{\bc\bq}}_{j})(f_{\mathrm{s\_ate}}(\bF^{\bV}) \bW^{\mathrm{\bc\bk}}_{j})\T}{\sqrt{d^{\mathrm{c}}_j}}  \nn
& 
+ \frac{(\bE^{\bU} \bW^{\mathrm{\bc\be\bq}}_{j})(\bE^{\bV} \bW^{\mathrm{\bc\be\bk}}_{j})\T}{\sqrt{d^{\mathrm{e}}_j}}
\bigg) \parens*{f_{\mathrm{s\_ate}}(\bF^{\bV}) \bW^{\mathrm{\bc\bv}}_{j}} 
\Bigg) {\bW}^{\mathrm{\bc}}, 
\end{align}
where the notations are similar to those in \cref{eq.f_s_att_w}.

Then, we combine $f_{\mathrm{s\_att}}(\bF^{\bU})$ and $ f^{\bU\bV}_{\mathrm{c\_att}}(\bF^{\bU})$ to obtain the point feature embedding for $\bU$, which is denoted by $f^{\bU\bV}_{\mathrm{sc}}(\bF^{\bU})$.
Meantime, we use fully connected (FC) layers to obtain the embedding of $\bE^{\bU}$ denoted by $f_{\mathrm{fc}}(\bE^{\bU})$. 
Furthermore, we introduce $[f^{\bU\bV}_{\mathrm{sc}}(\cdot), f_{\mathrm{fc}}(\cdot)]$ into the neural ODE to achieve the neural-diffusion-based Transformer given by 
\begin{align}
\bigg[ \frac{{\rm d} \bF^{\bU}(t)}{{\rm d} t}, \frac{{\rm d} \bE^{\bU}(t) }{{\rm d} t} \bigg] 
& = \big[ f^{\bU\bV}_{\mathrm{sc}}(\bF^{\bU}(t)), f_{\mathrm{fc}}(\bE^{\bU}(t)) \big], \label{eq.transformer_ode}
\end{align}
where $\big[ \bF^{\bU}(0), \bE^{\bU}(0) \big] = \big[ \bF^{\bU}, \bE^{\bU} \big]$ and $t \geq 0$. 
Finally, we use the output of the neural ODE, that is, the solution integrated from time $0$ to the terminal time $T$, as the embeddings $\big[ \bF^{\bU}_{\mathrm{sc\_{\bU\bV}}}$, $\bE^{\bU}_{\mathrm{sc\_{\bU\bV}}} \big]$ for $\bU$. 
Similarly, we also obtain $\big[ \bF^{\bV}_{\mathrm{sc\_{\bV\bU}}}$, $\bE^{\bV}_{\mathrm{sc\_{\bV\bU}}} \big]$ for $\bV$. These embeddings reflect the integrated information and dynamics captured by the neural ODE.
The architecture of the Transformer with neural diffusion is shown in \cref{fig:model_Transformer}.

\begin{figure}[!hbt]
\centering
\includegraphics[width=0.42\textwidth]{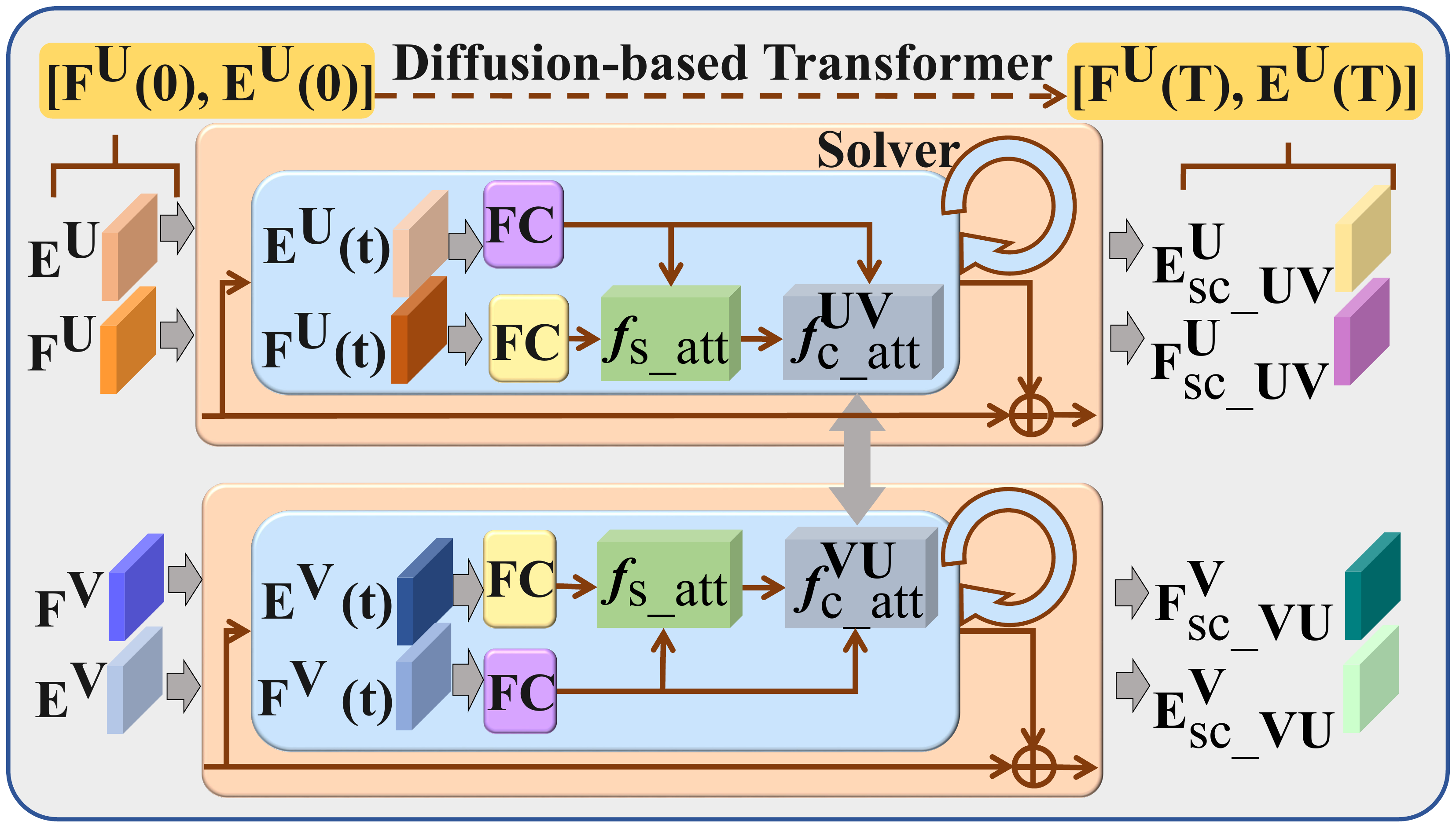}
\caption{Architecture of the feature-position Transformer based on neural ODE.}
\label{fig:model_Transformer} 
\end{figure}

\begin{table*}[!hbt] \footnotesize 
\centering
\newcommand{\tabincell}[2]{\begin{tabular}{@{}#1@{}}#2\end{tabular}}
\resizebox{\textwidth}{!}{\setlength{\tabcolsep}{7pt} 
\begin{tabular}{c | c c c c c c c c c c c c}
\hline\hline
\multirow{3}{*}{Method} & \multicolumn{4}{c}{Testing on the Boreas (Sunny)} & \multicolumn{4}{c}{Testing on the Boreas (Night)} & \multicolumn{4}{c}{Testing on the KITTI} \\
& \multicolumn{2}{c}{\textit{RTE (cm)}}  &  \multicolumn{2}{c}{\textit{RRE ($^\circ$)}}
& \multicolumn{2}{c}{\textit{RTE (cm)}}  &  \multicolumn{2}{c}{\textit{RRE ($^\circ$)}}
& \multicolumn{2}{c}{\textit{RTE (cm)}}  &  \multicolumn{2}{c}{\textit{RRE ($^\circ$)}} \\
&  \textit{MAE}     & \textit{RMSE}  & \textit{MAE} & \textit{RMSE} 
&  \textit{MAE}     & \textit{RMSE}  & \textit{MAE} & \textit{RMSE} 
&  \textit{MAE}     & \textit{RMSE}  & \textit{MAE} & \textit{RMSE}   \\ 
\hline 
ICP   
& 11.97               & 33.99                   & 0.14                & 0.35 
& 10.83               & 18.28                   & 0.11                & 0.21                
& 9.86                & 19.48                   & 0.17                & 0.27  \\ 
DCP       
& 14.59               & 25.39                   & 0.16                & 0.34 
& 11.63               & 17.36                   & 0.12                & 0.21                
& 19.96               & 31.44                   & 0.25                & 0.45  \\ 
HGNN++ 
& 13.81               & 23.63                   & 0.16                & 0.34 
& 14.41               & 23.16                   & 0.14                & 0.25                
& 10.38               & 19.69                   & 0.19                & 0.30  \\ 
VCR-Net           
& 7.51                & 16.22                   & 0.12                & 0.27 
& 8.71                & 13.56                   & 0.10                & 0.17    
& 7.62                & 14.75                   & \textbf{0.16}       & 0.25  \\ 
PCT++           
& 9.92                & 19.19                   & 0.14                & 0.30
& 9.81                & 15.77                   & 0.10                & 0.19                
& 9.40                & 17.60                   & 0.17                & 0.27  \\ 
GeoTrans  
& \textbf{3.11}       & 16.16                   & \textbf{0.08}       & 0.23   
& \underline{4.58}    & 15.78                   & \underline{0.08}    & 0.22                
& \underline{6.19}    & 10.10                   & 0.17                & 0.27   \\ 
BUFFER           
& 6.11                & \underline{7.49}        & \textbf{0.08}       & \textbf{0.12} 
& 4.64                & \underline{6.22}        & 0.11                & \underline{0.13}             
& 8.32                & \underline{9.71}        & 0.20                & \underline{0.26}   \\ 
RoITr           
& 7.66                & 13.05                    & 0.10                & 0.18 
& 9.37                & 13.68                    & 0.09                & 0.14             
& 7.50                & 11.94                    & 0.20                & 0.31   \\ \hline
PosDiffNet 
& \underline{3.38}    & \textbf{5.73}            & \textbf{0.08}       & \underline{0.15}  
& \textbf{4.46}       & \textbf{6.12}            & \textbf{0.07}       & \textbf{0.11}    
& \textbf{4.48}       & \textbf{7.28}            & \textbf{0.16}       & \textbf{0.25}    
\\
\hline\hline 
\end{tabular}}
\caption{Point cloud registration performance using the Boreas dataset for training. The best and the second-best results are highlighted in bold and underlined, respectively. 
}
\label{tab:Boreas_training}
\end{table*}

\subsection{Point Registration with Hierarchical Matching}
\tb{Hierarchical matching.} 
We conduct hierarchical matching for the corresponding windows, patches, and points. 
To match the corresponding windows and patches based on $(\bU_{\mathrm{win}}, \bV_{\mathrm{win}})$ and $(\bU, \bV)$ respectively, we conduct the exponential feature distance matrices with dual normalization \cite{sun2021loftr,qin2022geometric}. 
For instance, we perform the patch-level matching on the patch central point pairs $(\bU, \bV)$ corresponding to the point features $(\bF^{\bU}, \bF^{\bV})$.
We have the dual-normalized feature distance correlation matrix $\bW_{\bU\bV} \in \Real^{|\bU|\times|\bV|}$ where the element $w_{i,j}$ is given by 
\begin{align}\label{eq.w_matrix}
    w_{i,j}= \frac{\exp(-2\norm{\boldf^{\bU}_i-\boldf^{\bV}_j}^2_2)}
    { \sum_{j}\exp(-\norm{\boldf^{\bU}_i-\boldf^{\bV}_j}^2_2) \sum_{i}\exp(-\norm{\boldf^{\bU}_i-\boldf^{\bV}_j}^2_2)}, 
\end{align}
where $\boldf^{\bU}_i$ and $\boldf^{\bV}_j$ are the elements of $\bF^{\bU}$ and $\bF^{\bV}$ respectively. 
Then, we use Top-$K$ method to select $N_{\mathrm{p}}$ point pairs based on $w_{i,j}$, where the value of $w_{i,j}$ at the top $N_{\mathrm{p}}$-th is denoted by $w_{N_{\mathrm{p}}}$. 
We obtain the corresponding patch central points and their patches within points, respectively given by  
$(\tilde \bU, \tilde \bV) = \{(\bu_{i}, \bv_{j}) | (\bu_{i}, \bv_{j}) \in (\bU, \bV), w_{i,j} \ge w_{N_{\mathrm{p}}}, w_{i,j} \in \bW_{\bU\bV} \}$ and $(\tilde \calU, \tilde \calV) = \{(\calU_{i}, \calV_{j}) | (\calU_{i}, \calV_{j}) \in (\calU, \calV), w_{i,j} \ge w_{N_{\mathrm{p}}}, w_{i,j} \in \bW_{\bU\bV} \}$.

Furthermore, for each pair of corresponding patches within points, e.g. $(\tilde \calU_l, \tilde \calV_l) \in (\tilde \calU, \tilde \calV)$, we compute cosine similarity with post-processing Sinkhorn algorithm \cite{sarlin2020superglue} to obtain the similarity score matrix and use it to handle the point features. Then, using the Top-$K$ method, we obtain the corresponding points in this pair of patches similar to the processing in \cite{qin2022geometric, sarlin2020superglue}. 

Then, we use registration methods such as RANSAC \cite{fischler1981random}, weighted singular value decomposition (SVD) \cite{besl1992method} or local-to-global registration (LGR) \cite{qin2022geometric} to predict the rotation $\hat \bR$ and translation $\hat \bt$ based on $(\tilde{\calU}, \tilde{\calV})$. 
In this paper, the LGR method is used to achieve the point-level registration. 

\tb{Loss function.}
Due to advantages of learnable weights \cite{Wang2022robustloc,wang2020atloc}, we adopt a loss as follows
\begin{align}
    \calL = \exp(-\varpi) \calL_{\mathrm{patch}} + \varpi + \exp(-\varrho) \calL_{\mathrm{point}} + \varrho, 
\end{align}
where $\varpi$ and $\varrho$ are learnable parameters.
$\calL_{\mathrm{patch}}$ and $\calL_{\mathrm{point}}$ are the overlap-aware circle loss \cite{qin2022geometric} and negative log-likehood loss \cite{sarlin2020superglue} respectively. 

\begin{table*}[!hbt] \footnotesize 
\centering
\newcommand{\tabincell}[2]{\begin{tabular}{@{}#1@{}}#2\end{tabular}}
\resizebox{\textwidth}{!}{\setlength{\tabcolsep}{7pt} 
\begin{tabular}{c | c c c c c c c c c c c c} 
\hline\hline
\multirow{3}{*}{Method} & \multicolumn{4}{c}{Testing on the Boreas (Sunny)} & \multicolumn{4}{c}{Testing on the Boreas (Night)} & \multicolumn{4}{c}{Testing on the KITTI}  \\
& \multicolumn{2}{c}{\textit{RTE (cm)}}  &  \multicolumn{2}{c}{\textit{RRE ($^\circ$)}} 
& \multicolumn{2}{c}{\textit{RTE (cm)}}  &  \multicolumn{2}{c}{\textit{RRE ($^\circ$)}} 
& \multicolumn{2}{c}{\textit{RTE (cm)}}  &  \multicolumn{2}{c}{\textit{RRE ($^\circ$)}}  \\
&  \textit{MAE}     & \textit{RMSE}  & \textit{MAE} & \textit{RMSE}  
&  \textit{MAE}     & \textit{RMSE}  & \textit{MAE} & \textit{RMSE} 
&  \textit{MAE}     & \textit{RMSE}  & \textit{MAE} & \textit{RMSE}   \\
\hline
ICP 
& 11.97               & 33.99                   & 0.14                & 0.35
& 10.83               & 18.28                   & 0.11                & 0.21             
& 9.86                & 19.48                   & 0.17                & 0.27 \\ 
HGNN++          
& 14.31               & 24.41                   & 0.15                & 0.34
& 16.06               & 25.86                   & 0.15                & 0.27             
& 8.86                & 17.20                   & 0.20                & 0.31  \\ 
VCR-Net            
& 10.47               & 21.74                   & 0.13                & 0.29
& 11.97               & 19.78                   & 0.11                & {0.19}      
& 5.31                & {11.07}                 & \underline{0.16}    & \underline{0.24} \\ 
PCT++          

& 13.22               & 24.41                   & 0.15                & 0.34
& 11.61               & {19.57}                 & 0.13                & 0.31              
& 6.16                & 13.96                   & 0.18                & 0.28   \\ 
GeoTrans    

& \textbf{3.98}       & 18.09                   & \underline{0.09}    & 0.27
& \textbf{5.97}       & 27.90                   & \underline{0.09}    & 0.33               
& \textbf{3.93}       & 13.50                   & 0.18                & 0.50     \\ 
BUFFER           
& 6.75                & \underline{8.23}        & 0.10                & \textbf{0.12} 
& 9.17                & \underline{10.86}       & 0.12                & \underline{0.14}             
& 5.38                & \textbf{5.76}           & 0.18                & 0.29   \\ 
RoITr           
& 7.73               & 13.34                    & 0.10                & 0.18             
& 9.61               & 14.00                    & \underline{0.09}    & \underline{0.14}   
& 6.97               & 11.48                    & 0.19                & 0.29   \\  \hline
PosDiffNet   
& \underline{4.30}    & \textbf{7.32}           & \textbf{0.08}       & \underline{0.16}             
& \underline{6.65}    & \textbf{9.47}           & \textbf{0.08}       & \textbf{0.13}   
& \underline{3.97}    & \underline{6.44}        & \textbf{0.15}       & \textbf{0.23}    \\
\hline\hline
\end{tabular}}
\caption{Point cloud registration performance using the KITTI dataset for training. 
}
\label{tab:kitti_training}
\end{table*}

\section{Experiments}
\tb{Datasets.}
The Boreas dataset \cite{burnett2022boreas}, a publicly accessible street dataset comprising LiDAR and camera data, is used in our experiments. It encompasses diverse weather conditions, such as snow, rain, and nighttime scenarios. Notably, this dataset provides meticulously post-processed ground-truth poses. Leveraging these ground-truth poses, we can readily derive the transformation matrix for each adjacent pair of LiDAR point clouds.
The KITTI dataset \cite{geiger2013kitti} is also used which includes multi-sensor data. This dataset consists of $11$ sequences capturing various street scenes, and it also offers global ground-truth poses. 
More details are provided in the supplementary material. 

\tb{Implementation Details.}
We set the dimension $d$ to $256$ in \cref{eq:GRAPH_pde_BND}. For handling the neighborhood graph of the $k$ nearest neighbors, where $k=15$, we employ the graph learning layer $f_\mathrm{BND}$ in \cref{eq:GRAPH_pde_BND} as a composition of EdgeConv layers \cite{wang2019dynamic}. 
Specifically, we utilize two EdgeConv layers, 
with hidden input and output dimensions of $[1024, 512]$ and $[1536, 512]$ respectively.
We also use the DGCNN and the Transformer based on self-cross attention whose architectures are identical to those in \cite{wang2019dynamic} and \cite{wang2019deep} to extract features of window central points. 
Regarding the Transformer modules, we employ four attention heads, each with $128$ hidden features, resulting in a total of $512$ hidden features. We adopt the Adam optimizer \cite{kingma2014adam} with a learning rate of $0.0001$. The number of training epochs is set to $50$. The model is executed on an NVIDIA RTX A5000 GPU. More details are provided in the supplementary material.

\subsection{Results and Analysis}\label{sect:results}

\tb{Performance on datasets with dynamic object perturbations.}
We assess the point cloud registration performance of PosDiffNet and compare it against various baseline methods. \emph{The training data is based on the subset of the Boreas dataset collected under sunny weather conditions.}
During the testing phase, we evaluate PosDiffNet in three distinct categories. The first and second categories are subsets of the Boreas dataset, captured under different weather conditions: sunny and night, respectively. The third category consists of a subset of the KITTI dataset, where the point clouds are collected under sunny weather conditions. 
The experimental results, presented in \cref{tab:Boreas_training}, showcase the superior performance of PosDiffNet compared to the baseline methods. PosDiffNet achieves better results across most evaluation metrics, including root mean square error (RMSE) and mean absolute error (MAE), for the relative translation error (RTE) and relative rotation error (RRE). 

Furthermore, we assess the performance and conduct a comparative analysis using the subset of KITTI dataset as the training dataset. 
During the testing phase, we employ the same three categories as mentioned in the previous subsection. 
From \cref{tab:kitti_training}, we observe that PosDiffNet consistently outperforms the other baseline methods in most cases, demonstrating its superior registration performance. 

\begin{table}[!hbt] \footnotesize 
\centering
\newcommand{\tabincell}[2]{\begin{tabular}{@{}#1@{}}#2\end{tabular}}
\resizebox{0.47\textwidth}{!}{\setlength{\tabcolsep}{6pt} 
\begin{tabular}{c | c | c c c c} 
\hline\hline
\multirow{2}{*}{Weather} & \multirow{2}{*}{Method}         
& \multicolumn{2}{c}{\tabincell{c}{RTE (cm)}}  &  \multicolumn{2}{c}{\tabincell{c}{ RRE ($^\circ$)}} \\
& &  \textit{MAE}     & \textit{RMSE}  & \textit{MAE} & \textit{RMSE}   \\
\hline
\multirow{9}{*}{\tabincell{c}{Rain}} 
& ICP               
& 11.90               & 20.57                   & 0.15                & 0.27                \\ 
& DCP             
& 10.60               & 16.00                   & 0.14                & 0.22                \\ 
& HGNN++        
& 15.02               & 25.63                   & 0.18                & 0.32                \\ 
& VCR-Net      
& 8.81                & 14.09                   & 0.13                & {0.20}       \\ 
& PCT++            
& 10.39               & 16.86                   & 0.14                & 0.24                \\ 
& GeoTrans    
& \underline{4.96}    & 16.75                   & \underline{0.10}    & 0.25                \\ 
& BUFFER  
& 8.00                 & \underline{8.36}       & 0.12                 & 0.18                 \\ 
& RoITr  
& 8.01                 & 11.53                  & 0.11                 & \underline{0.16}     \\  \hline
& PosDiffNet      
& \textbf{4.56}        & \textbf{6.26}          & \textbf{0.09}        & \textbf{0.14}  \\
\hline\hline
\multirow{9}{*}{\tabincell{c}{Snow}} 
& ICP               
& 8.27                 & 12.59                     & 0.10                 & 0.15                 \\ 
& DCP            
& 7.82                 & 11.51                     & 0.12                 & 0.19                 \\ 
& HGNN++             
& 9.53                 & 14.55                     & 0.13                 & 0.21                 \\ 
& VCR-Net         
& 5.65                 & 8.48                       & 0.09                 & 0.13        \\ 
& PCT++            
& 6.66                 & 10.20                     & 0.10                 & 0.15                 \\ 
& GeoTrans  
& \textbf{3.90}         & 11.27                    & \underline{0.08}     & 0.19                 \\ 
& BUFFER  
& 7.00                 & \underline{7.58}          & 0.09                 & \textbf{0.10}                 \\ 
& RoITr  
& 8.67                 & 12.82                     & 0.10                 & 0.15                 \\ \hline
& PosDiffNet      
& \underline{4.18}     & \textbf{5.89}             & \textbf{0.07}        & \underline{0.11}  \\
\hline\hline
\end{tabular}}
\caption{Performance on the Boreas dataset under rainy and snowy weather conditions.}
\label{tab:Boreas_Rt_rain_snow}
\end{table}

\tb{Performance on datasets under bad weather conditions.}
To assess the robustness of PosDiffNet against natural noise, we conduct experiments on the Boreas dataset under adverse weather conditions such as rain and snow.
From \cref{tab:Boreas_Rt_rain_snow}, it is evident that PosDiffNet outperforms the baseline methods across all evaluation criteria. This indicates the superior performance of PosDiffNet in challenging rainy conditions.
Similarly, from \cref{tab:Boreas_Rt_rain_snow}, we observe that PosDiffNet has lower RMSE in RTE compared to the baseline methods. These results suggest that PosDiffNet produces fewer outliers among the predicted results, further verifying its robustness in handling snowy conditions compared to the baselines.

\begin{table}[!hbt] \footnotesize 
\centering
\newcommand{\tabincell}[2]{\begin{tabular}{@{}#1@{}}#2\end{tabular}}
\begin{tabular}{c | c c c c} 
\hline\hline
\multirow{2}{*}{Method}         & \multicolumn{2}{c}{\tabincell{c}{RTE(cm)}}  &  \multicolumn{2}{c}{\tabincell{c}{RTE ($^\circ$)}} \\
                                &  \textit{MAE}     & \textit{RMSE}  & \textit{MAE} & \textit{RMSE}   \\
\hline
ICP            
& 14.97               & 26.09                   & 0.20                & 0.32                   \\ 
DCP          
& 9.97                & 15.84                   & 0.29                & 0.52                   \\ 
HGNN++          
& 10.62               & 18.76                   & 0.22                & 0.34                   \\ 
VCR-Net        
& 6.40                & {12.40}                 & \textbf{0.18}       & \textbf{0.27}           \\ 
PCT++       
& 6.85                & 14.03                   & \underline{0.20}    & \underline{0.30}          \\ 
GeoTrans  
& \underline{5.37}    & 14.43                   & 0.25                & 0.50                   \\ 
BUFFER  
& 6.12                & \underline{7.04}        & 0.23                & 0.36                 \\ 
RoITr  
& 9.79                & 14.94                    & 0.27                & 0.45                 \\  \hline
PosDiffNet       
& \textbf{4.84}       & \textbf{6.93}            & \underline{0.20}     & 0.33                 \\ 
\hline\hline
\end{tabular}
\caption{Point cloud registration performance on the KITTI dataset with additive white Gaussian noise.}
\label{tab:KITTI_Rt_noise}
\end{table}

\tb{Performance on datasets with additive white Gaussian noise.}
We evaluate the robustness of PosDiffNet under the presence of additive white Gaussian noise $\mathcal{N}(\mu=0, \sigma=0.25)$ in the KITTI dataset during the testing phase.
From \cref{tab:KITTI_Rt_noise}, we observe that PosDiffNet outperforms the other benchmark methods in terms of relative translation prediction. 
Comparing Table \ref{tab:KITTI_Rt_noise} with \cref{tab:kitti_training}, we note that PosDiffNet experiences a smaller degradation in relative rotation prediction compared to the baselines. 
These findings demonstrate the crucial role of PosDiffNet in handling additive white Gaussian noise, particularly in scenarios where accurate relative translation prediction is required.

\tb{Overlapping Discussion.} 
We conduct the experiments under lower overlapping conditions using the KITTI dataset with the 10-m frame interval between each pair of frames. 
From \cref{tab:KITTI_10m_frame}, we observe that PosDiffNet outperforms or is on par with the SOTA baselines, which evaluates the efficiency of our method. 
\begin{table}[!hbt] \footnotesize 
\centering
\newcommand{\tabincell}[2]{\begin{tabular}{@{}#1@{}}#2\end{tabular}}
\begin{tabular}{c | c c c }
\hline\hline
Method           & TE(cm)                & RE($^\circ$)                 & RR(\%)                     \\ \hline
3DFeat-Net       & 25.9                  & 0.25                         & 96.0                        \\
D3Feat           & 7.2                   & 0.30                         & \textbf{99.8}               \\
SpinNet          & 9.9                   & 0.47                         & 99.1                        \\
Predator         & \underline{6.8}       & 0.27                         & \textbf{99.8}               \\ 
CoFiNet          & 8.2                   & 0.41                         & \textbf{99.8}               \\ 
PointDSC         & 8.1                   & 0.35                         & 98.2                        \\
SC$^2$-PCR       & 7.2                   & 0.32                         & 99.6                        \\
GeoTrans         & \underline{6.8}       & \textbf{0.24}                & \textbf{99.8}               \\
MAC              & 8.5                   & 0.40                         & 99.5                        \\   \hline
DGR              & $\sim$32              & 0.37                         & 98.7                        \\  
HRegNet          & $\sim$12              & 0.29                         & 99.7                        \\
UDPReg           & $\sim$8.8             & 0.41                         & 64.6                        \\
SuperLine3D      & $\sim$8.7             & 0.59                         & 97.7                        \\  \hline
PosDiffNet       & \textbf{6.6}          & \textbf{0.24}                & \textbf{99.8}                         \\
\hline\hline 
\end{tabular}
\caption{Performance on the 10-m frame KITTI dataset (the same setting as that in \cite{qin2022geometric,zhang20233d}). The results of baselines are borrowed from \cite{qin2022geometric,chen2022sc2,zhao2022superline3d,mei2023unsupervised,zhang20233d}. 
The metrics are the same as those in \cite{zhang20233d}. ``$\sim$'' indicates the lack of a dataset setting description or a setting similar to that in \cite{qin2022geometric,zhang20233d}. The PointDSC, SC$^2$-PCR, and MAC are based on the FPFH method \cite{Rusu2009fpfh}.  
}
\label{tab:KITTI_10m_frame}
\end{table}

\begin{figure}[!hbt]
\centering
\includegraphics[width=0.46\textwidth]{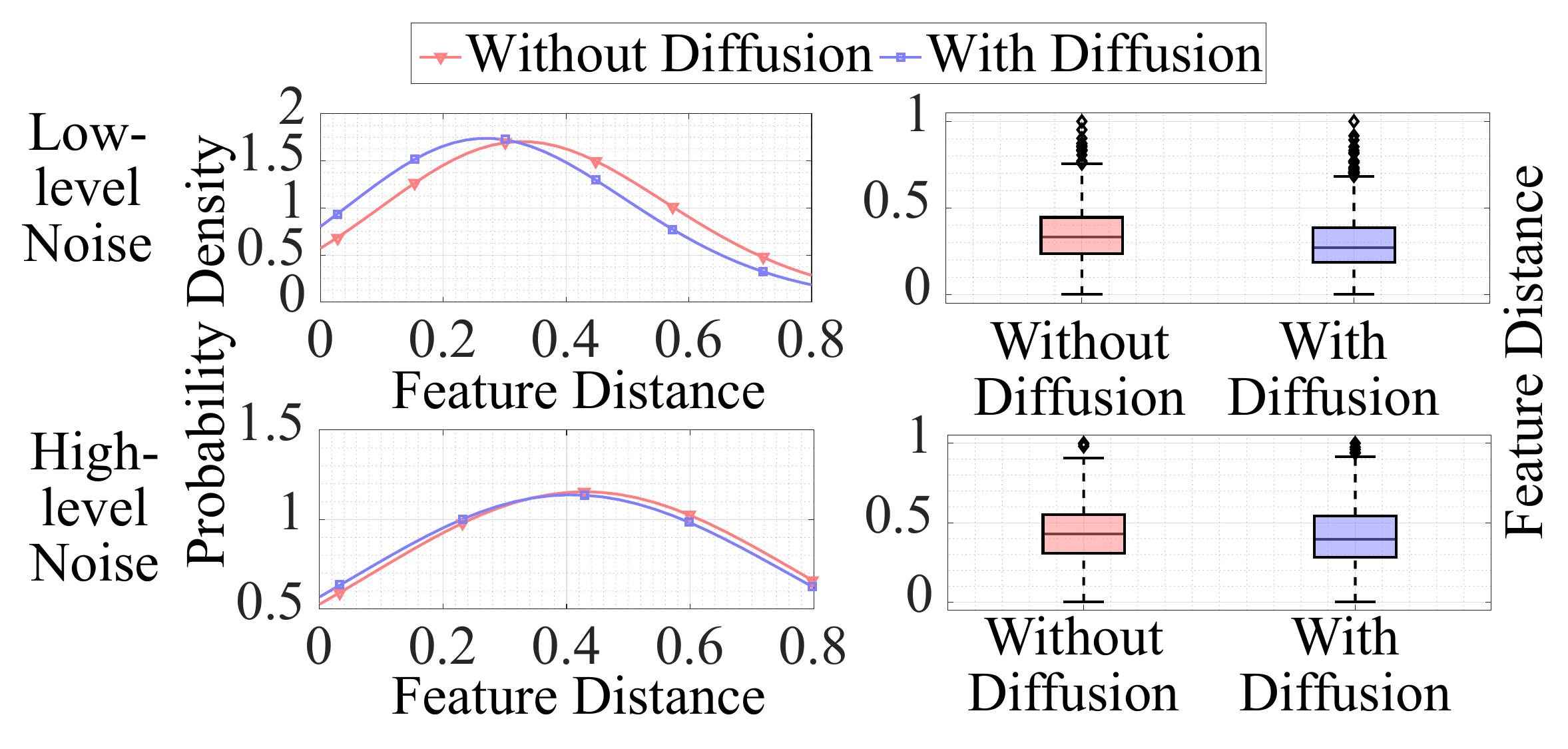}
\caption{Kernel density estimate plots and box plots for the normalized feature distance between noisy and clean conditions for the modules with or without Beltrami diffusion.
The additive noises include two Gaussian noises following $\calN(0, \sigma = 0.25)$ and $\calN(0, \sigma = 1.5)$, corresponding to the low-level and high-level noises.}
\label{fig:Beltrami_feature_dis_noise} 
\end{figure}

\begin{figure}[!hbt]
\centering
\includegraphics[width=0.46\textwidth]{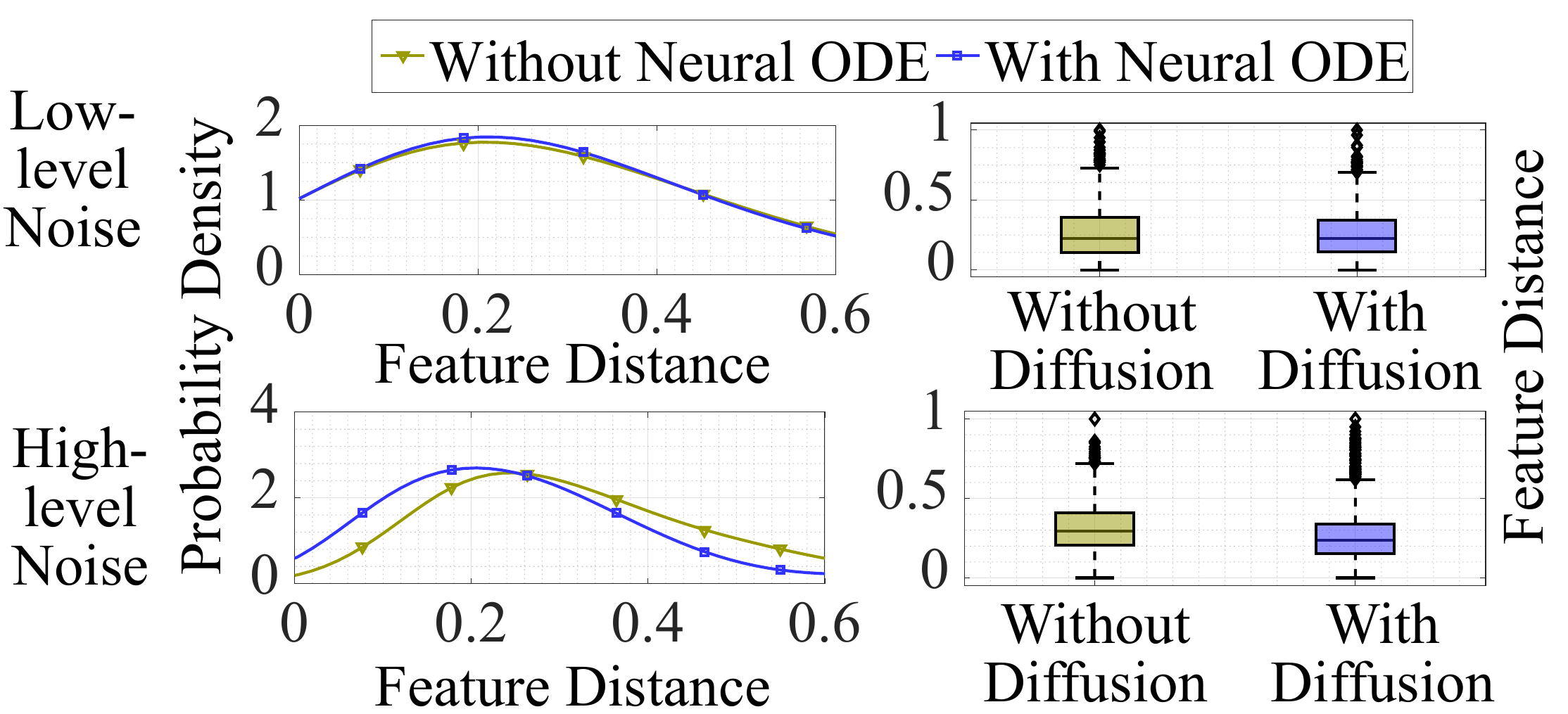}
\caption{Kernel density estimate plots and box plots for the normalized feature distance between noisy and clean conditions for the Transformer with or without neural ODE.}
\label{fig:ODE_Transformer_feature_dis_noise} 
\end{figure}

\tb{Robustness of diffusion modules.} 
We compare the feature distances based on $\calL_2$ distance between different levels of noisy conditions and the clean condition for the output of the modules with and without diffusion. From \cref{fig:Beltrami_feature_dis_noise,fig:ODE_Transformer_feature_dis_noise}, the statistics of feature distances with diffusion demonstrate superior performance and stability compared to those without diffusion. This indicates the robustness of diffusion.

\tb{Ablation Study.}
We evaluate the effectiveness of each module and the Transformer in our design. The performance improvements are observed through the evaluation of individual modules or components of the Transformer, as shown in \cref{tab:ablation_study_modules} and \cref{tab:ablation_study_Transformer}. 
\begin{table}[!hbt] \footnotesize 
\centering
\newcommand{\tabincell}[2]{\begin{tabular}{@{}#1@{}}#2\end{tabular}}
\resizebox{0.46\textwidth}{!}{\setlength{\tabcolsep}{2pt} 
\begin{tabular}{c | c c c c} 
\hline\hline
\multirow{2}{*}{Method}         & \multicolumn{2}{c}{\tabincell{c}{RTE (cm)}}  &  \multicolumn{2}{c}{\tabincell{c}{ RRE ($^\circ$)}} \\
                                &  \textit{MAE}     & \textit{RMSE}  & \textit{MAE} & \textit{RMSE}   \\
\hline
full model (with all modules)        & \textbf{3.97}        & \textbf{6.44}            & \textbf{0.15}   & \textbf{0.23}        \\  
w/o window                           & 4.34                 & 6.99                     & \textbf{0.15}   & \textbf{0.23}        \\ 
w/o Beltrami diffusion               & 5.20                 & 8.21                     & 0.16            & 0.25                 \\
w/o diffusion-based Transformer      & 11.52                & 18.72                    & 0.25            & 0.42                 \\ 
w/ Geometric Transformer              & 4.23                 & 6.89                     & \textbf{0.15}   & 0.24                 \\
\hline\hline
\end{tabular}}
\caption{Ablation for the modules.}
\label{tab:ablation_study_modules}
\end{table}
\begin{table}[!hbt] \footnotesize 
\centering
\newcommand{\tabincell}[2]{\begin{tabular}{@{}#1@{}}#2\end{tabular}}
\resizebox{0.465\textwidth}{!}{\setlength{\tabcolsep}{1pt} 
\begin{tabular}{c | c c c c} 
\hline\hline
\multirow{2}{*}{Method}         & \multicolumn{2}{c}{\tabincell{c}{RTE (cm)}}  &  \multicolumn{2}{c}{\tabincell{c}{ RRE ($^\circ$)}} \\
                                &  \textit{MAE}     & \textit{RMSE}  & \textit{MAE} & \textit{RMSE}   \\
\hline
full Transformer (with all components)  & \textbf{3.97}        & \textbf{6.44}            & \textbf{0.15}   & \textbf{0.23}        \\   
w/o Beltrami embedding                  & 4.32                 & 6.95                     & \textbf{0.15}   & \textbf{0.23}        \\ 
w/o neural ODE                          & 4.81                 & 7.79                     & 0.16            & 0.24        \\  
w/o neural ODE \& Beltrami               & 5.87                 & 9.69                     & 0.17            & 0.27        \\ 
w/  Geometric (positional) embedding    & 4.07                 & 6.55                     & \textbf{0.15}   & \textbf{0.23}         \\
\hline\hline
\end{tabular}}
\caption{Ablation for the Transformer module.}
\label{tab:ablation_study_Transformer}
\end{table}

\tb{Limitation Discussion.}
The resource-intensive nature of diffusion and attention computation presents challenges for resource-constrained devices. Future research will focus on exploring model miniaturization techniques to mitigate these constraints.

\section{Conclusion}

In this work, we introduce PosDiffNet, a model that combines a joint window-patch-point correspondence method with neural Beltrami flow and diffusion-based Transformer. PosDiffNet facilitates the simultaneous processing of point features and position information and achieves SOTA performance on datasets in large fields of view, demonstrating its effectiveness and robustness. 

\section{Acknowledgments}

This research is supported by the Singapore Ministry of Education Academic Research Fund Tier 2 grant MOE-T2EP20220-0002, and the National Research Foundation, Singapore and Infocomm Media Development Authority under its Future Communications Research and Development Programme. The computational work for this article was partially performed on resources of the National Supercomputing Centre, Singapore (https://www.nscc.sg). 




\bibliography{aaai24}

\newpage

\ \par
\newpage

\section{[Supplementary Material]}

In this supplementary material, we discuss the motivations and theoretical basis for our method.
We also provide more details about the datasets, model implementation, and baselines used in our main paper.
Then, we present additional experiments and ablation studies that are not included in the main paper due to space constraints.
Furthermore, we offer further analysis of the experimental results.
Finally, we provide point cloud alignment results as visualizations.

\section{Motivations and Theoretical Basis}


\subsection{Overview of Neural Diffusion}

In terms of a neural ordinary differential equation (ODE) layer \cite{chen2018neural,pal2021opening,lehtimaki2022accelerating}, the relationship between the input and output is defined as follows 
\begin{align}
\ddfrac{\bZ(t)}{t}=h_{\btheta}(\bZ(t), t), \label{eq:NODE}
\end{align}
where $h_{\btheta}:\Real^n \times [0,T] \rightarrow \Real^n$ is the trainable layers with the parameter $\btheta$, $\bZ: [0,T] \rightarrow \Real^n$ denotes the $n$-dimensional state, and $T$ denotes the terminal time. Simply, when the system does not explicitly depend on $t$ \cite{kang2021Neurips}, it can be regarded as the time-invariant (autonomous) case $h_{\btheta}(\bZ(t), t) = h_{\btheta}(\bZ(t))$ \cite{kang2021Neurips}.
To solve \cref{eq:NODE}, the output $\bZ(T)$ is obtained by integrating $h_{\btheta}(\bZ(t), t)$ from $t=0$ to $t=T$. 
For graph-structured data, graph neural  partial differential equations (PDEs) \cite{song2022robustness, chamberlain2021grand, Wang2022robustloc} are designed based on continuous flows, which represent the graph features more efficiently and informatively \cite{song2022robustness, kang2021Neurips, yan2019robustness, dupont2019augmented}.

\subsection{Stability of Point Cloud Representation with Beltrami Diffusion}

From the perspective of dynamical physical systems, neural diffusion methods can be regarded as dynamic systems whose stability is related to the feature representation \cite{kang2021Neurips, song2022robustness, dupont2019augmented}.
The stability of the system can be used to analyze neural graph diffusion based on Beltrami flow, the details of which are introduced as follows.

It is well known that a small perturbation at the input of an unstable dynamical system will result in a significant distortion in the system's output. First, we introduce stability in dynamical physical systems and then relate it to graph neural flows. We consider the evolution of a dynamical system described by the autonomous nonlinear differential equation mentioned in \cref{eq:NODE} in the main paper. 

\textbf{Stability of Dynamical Systems.} We introduce the notion of stability from a dynamic systems perspective, which is highly related to the robustness of graph learning against node feature perturbations.
Suppose $h_{\btheta}$ (as mentioned in \cref{eq:NODE} in the main paper) has an equilibrium at $\bZ_0$ such that $h_{\btheta}\left(\bZ_0\right)=0$.
\emph{The system is considered ``stable'' if there exists an input $|\bZ(0)|< \Delta$ ($\Delta > 0$) such that the output satisfies $|\bZ(t)|< \rho$, $\forall t \geq 0$, for some constant $\rho$.}

\begin{Definition}[Lyapunov stability \cite{katok1995introduction}] \label{def.lyp}
The equilibrium point $\mathbf{Z}_0$ is Lyapunov stable if there exists $\delta > 0$ such that for any initial condition $\mathbf{Z}(0)$ satisfying $\left|\mathbf{Z}(0)-\mathbf{Z}_0\right| < \delta$, we have $\left|\mathbf{Z}(t)-\mathbf{Z}_0\right| < \epsilon$ for all $t \geq 0$ and $\forall \epsilon > 0$.
\end{Definition}
\begin{Definition}[Asymptotically stable \cite{katok1995introduction}] 
Based on \cref{def.lyp}, the equilibrium point $\mathbf{Z}_0$ is asymptotically stable if it is Lyapunov stable and $\lim_{t\to\infty} |\mathbf{Z}(t)-\mathbf{Z}_0|=0$ for some $\varepsilon > 0$ such that $|\mathbf{Z}(0)-\mathbf{Z}_0|<\varepsilon$.
\end{Definition}

For a dynamic system with Lyapunov stability, its solutions with initial points near an equilibrium point $\mathbf{Z}_0$ remain near $\mathbf{Z}_0$. Asymptotic stability indicates that not only do trajectories stay near $\mathbf{Z}_0$ (which is known as Lyapunov stability), but the trajectories also converge to $\mathbf{Z}_0$ as time approaches infinity (which is known as asymptotic stability).

\textbf{Stability Analysis for Graph Neural Flows.} 
From \cite{song2022robustness}, the Beltrami diffusion equation \cref{eq.beltrami_GNN} in the main paper is also equal to   
\begin{align}
\frac{\partial \mathbf{Z}(t)}{\partial t}=(\mathbf{A}(\mathbf{Z}(t))-\mathbf{I}) \mathbf{Z}(t), \label{eq.graph_pde_equation}
\end{align}
where $\bZ(t)$ contains vertex features $\bX(t)$ and positional embeddings $\bY(t)$, i.e., $\bZ(t)=(\bX(t), \bY(t))$, $\mathbf{A}(\mathbf{Z}(t))$ is a learnable matrix based on $\mathbf{Z}(t)$. 
This is corresponding to the heat flow using the attention weight function. 

\begin{Proposition}[Lyapunov stability of Beltrami Neural Diffusion \cite{song2022robustness}]\label{prop.diffusion_stable}
Diffusion equation \cref{eq.graph_pde_equation} is Lyapunov stable. 
\end{Proposition}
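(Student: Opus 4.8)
The plan is to exhibit an explicit Lyapunov (energy) function and show that it is non-increasing along every trajectory of \cref{eq.graph_pde_equation}, which is the standard route to Lyapunov stability in the sense of \cref{def.lyp}. Since $(\bA(\bZ)-\bI)\bZ$ vanishes at $\bZ=\bzero$, the origin $\bZ_0=\bzero$ is an equilibrium, so I would take the positive-definite candidate
\begin{align}
V(\bZ) = \frac{1}{2}\norm{\bZ}^2 . \nonumber
\end{align}
Differentiating along a solution $\bZ(t)$ and substituting the dynamics gives
\begin{align}
\dot V = \ip{\bZ}{(\bA(\bZ)-\bI)\bZ} = \ip{\bZ}{\bA(\bZ)\bZ} - \norm{\bZ}^2 , \nonumber
\end{align}
so the entire argument reduces to controlling the quadratic form $\ip{\bZ}{\bA(\bZ)\bZ}$.

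The key structural fact I would invoke is that the operator $\bA(\bZ)$ produced by the softmax-normalized heat flow is right-stochastic, i.e.\ it has non-negative entries and unit row sums. When the attention weights are symmetrized, $\bA(\bZ)$ is doubly stochastic; by Perron--Frobenius its spectral radius equals $1$, and symmetry forces its (real) eigenvalues into $[-1,1]$, so that $\bA(\bZ)-\bI\preceq 0$ for every admissible $\bZ$. Hence $\ip{\bZ}{(\bA(\bZ)-\bI)\bZ}\le 0$ and therefore $\dot V \le 0$. If the attention is only right-stochastic (not symmetric), I would instead switch the energy to the row-wise $\ell_\infty$ functional, namely the maximum over vertices of the per-vertex feature norm, and use that each row of $\bA(\bZ)\bZ$ is a convex combination of the rows of $\bZ$, which makes this functional non-expansive. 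In either case one obtains $V(\bZ(t)) \le V(\bZ(0))$ for all $t\ge 0$.

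From the monotonicity of $V$, stability follows immediately: given $\epsilon > 0$, choosing $\delta = \epsilon$ yields $\norm{\bZ(0)-\bZ_0} < \delta \Rightarrow \norm{\bZ(t)-\bZ_0} \le \norm{\bZ(0)-\bZ_0} < \epsilon$ for all $t\ge 0$, which is exactly \cref{def.lyp}. (Note I do not claim the stronger asymptotic convergence, only that trajectories remain trapped in the sublevel set determined by the initial condition.)

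I expect the main obstacle to be the state dependence of $\bA(\bZ)$: because \cref{eq.graph_pde_equation} is genuinely nonlinear, a frozen-coefficient eigenvalue bound does \emph{not} by itself certify stability, and I must verify that the dissipativity inequality $\ip{\bZ}{(\bA(\bZ)-\bI)\bZ}\le 0$ holds \emph{uniformly} along the trajectory rather than merely at the equilibrium. This hinges on confirming that the learned attention matrix retains its (doubly) stochastic structure across all admissible states, which is precisely where the explicit construction of $\bA(\cdot)$ from the Beltrami/heat-flow normalization must be exploited.
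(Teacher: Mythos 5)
The paper never actually proves this proposition: it is imported verbatim from \cite{song2022robustness} (note the citation in the proposition header), and the supplement moves straight on to discretization, so there is no in-paper argument to compare yours against. Measured against the cited reference, your route is genuinely different: the standard justification there is spectral --- for a softmax-normalized, hence right-stochastic, attention matrix $\bA$ every Gershgorin disc of $\bA-\bI$ is centred at $a_{ii}-1$ with radius $1-a_{ii}$, so the spectrum lies in the closed left half-plane, and the eigenvalue $0$ (inherited from the semisimple Perron eigenvalue $1$ of $\bA$) is semisimple, giving marginal stability of the frozen linear system. Your explicit Lyapunov-function argument buys something real: because the dissipativity inequality is checked pointwise in time, it absorbs the state dependence of $\bA(\bZ(t))$ directly, with no frozen-coefficient step. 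Two caveats on your own execution. First, as you half-acknowledge, the quadratic energy $V=\tfrac12\norm{\bZ}^2$ only closes in the symmetric (doubly stochastic) case; for a merely right-stochastic $\bA$ the symmetric part $\tfrac12(\bA+\bA\T)$ can have eigenvalues exceeding $1$ (e.g.\ $\bA=\smat{0 & 1\\ 0 & 1}$ gives largest eigenvalue $(1+\sqrt{2})/2$), so $\ip{\bZ}{(\bA(\bZ)-\bI)\bZ}\le 0$ can fail and the $\ell_\infty$ maximum-principle fallback --- each row of $\bA\bZ$ is a convex combination of rows of $\bZ$, so the largest per-vertex norm has non-positive upper Dini derivative --- is the argument that actually carries the general case; it should be promoted from a parenthetical to the main line. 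Second, the structural hypothesis must be made explicit: \cref{eq.graph_pde_equation} as stated only says $\bA(\bZ(t))$ is ``a learnable matrix,'' and without row-stochasticity the claim is simply false (take $\bA=2\bI$, giving $\dot{\bZ}=\bZ$). With that hypothesis imported from the attention construction, your proof is correct and, in handling the nonlinearity honestly, somewhat stronger than the citation it replaces.
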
 

To solve \cref{eq.graph_pde_equation} numerically, explicit schemes are designed \cite{chamberlain2021grand,chamberlain2021blend}.
A simple method is to replace the continuous time derivative $\frac{\partial}{\partial t}$ with forward time difference, which is given by 
\begin{align}
\frac{\mathbf{z}_i^{(k+1)}-\mathbf{z}_i^{(k)}}{\tau}=\sum_{j:(i, j) \in \mathcal{E}(\bY^{(k)})} a\left(\mathbf{z}_i^{(k)}, \mathbf{z}_j^{(k)}\right)\left(\mathbf{z}_j^{(k)}-\mathbf{z}_i^{(k)}\right), \label{eq.explict_pde}
\end{align}
where $k$ is a discrete-time index, corresponding to the iteration process, and $\tau$ is the time step, corresponding to the discretization process. $\mathcal{E}(\bY^{(k)})$ is the edge set of positional embeddings $\bY^{(k)}$. 
In a matrix-vector form with $\tau=1$, \cref{eq.explict_pde} can be rewritten as 
\begin{align}
\mathbf{Z}^{(k+1)}=\left(\mathbf{A}^{(k)}-\mathbf{I}\right) \mathbf{Z}^{(k)}=\mathbf{Q}^{(k)} \mathbf{Z}^{(k)}, \label{eq.graph-pde-k}
\end{align}
where $a_{i j}^{(k)}=a\left(\mathbf{z}_i^{(k)}, \mathbf{z}_j^{(k)}\right)$ and the elements of the matrix $\mathbf{Q}^{(k)}$ are given by
\begin{align}
q_{i j}^{(k)}= \begin{cases}1-\tau \sum_{l:(i, l) \in \mathcal{E}} a_{i l}^{(k)} & i=j \\ \tau a_{i j}^{(k)} & (i, j) \in \mathcal{E}\left(\mathbf{U}^{(k)}\right) \\ 0 & \text { otherwise }\end{cases}
\end{align} 
Computing the scheme \cref{eq.graph-pde-k} for many times, the solution to the diffusion equation can be computed given an initial $\mathbf{Z}^{(0)}$. It is \emph{explicit} since the update $\mathbf{Z}^{(k+1)}$ can be obtained directly using $\mathbf{Q}^{(k)}$ and $\mathbf{Z}^{(k)}$.
From \cite{song2022robustness,chamberlain2021grand,chamberlain2021blend}, \emph{the vast majority of graph neural network architectures are explicit single step schemes of the forms \cref{eq.explict_pde} or \cref{eq.graph-pde-k}}. 
Therefore, the graph neural network $f_{\mathrm{BND}}(\cdot)$ in \cref{eq:GRAPH_pde_BND} mentioned in the main paper generally can use the single step schemes of the forms \cref{eq.explict_pde} or \cref{eq.graph-pde-k}.
This indicates that the $f_{\mathrm{BND}}(\cdot)$ can be also expressed as the form of the right-hand side of \cref{eq.graph_pde_equation}. 

\begin{Proposition}
The module of point cloud representation with Beltrami flow mentioned in the main paper has stability when the $f_{\mathrm{BND}}(\cdot)$ in \cref{eq:GRAPH_pde_BND} can be expressed as the form of \cref{eq.graph_pde_equation}. 
\end{Proposition}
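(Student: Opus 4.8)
The plan is to recognize the statement as a direct corollary of \cref{prop.diffusion_stable}, obtained by identifying the feature--position state of the Beltrami module with the combined state $\bZ(t)=(\bX(t),\bY(t))$ appearing in \cref{eq.graph_pde_equation}. First I would set up the correspondence explicitly: in \cref{eq:GRAPH_pde_BND} the vertex-feature block $\bH^{\bU}(t)$ plays the role of $\bX(t)$ and the positional block $\bI^{\bU}(t)$ plays the role of $\bY(t)$, so the stacked variable $[\bH^{\bU}(t),\bI^{\bU}(t)]$ is exactly an instance of $\bZ(t)$, with the neighborhood graph built from the $k$ nearest neighbors of $\bI^{\bU}(t)$ serving as the edge set $\mathcal{E}(\bY^{(k)})$ used in \cref{eq.explict_pde}.

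Next, I would invoke the hypothesis of the proposition. By assumption $f_{\mathrm{BND}}(\cdot)$ can be written in the form of the right-hand side of \cref{eq.graph_pde_equation}, i.e.\ there is a learnable matrix-valued function $\bA(\cdot)$ with $f_{\mathrm{BND}}([\bH^{\bU}(t),\bI^{\bU}(t)]) = (\bA([\bH^{\bU}(t),\bI^{\bU}(t)])-\bI)[\bH^{\bU}(t),\bI^{\bU}(t)]$. Substituting this into \cref{eq:GRAPH_pde_BND} shows that the Beltrami representation module is literally an instance of the diffusion equation \cref{eq.graph_pde_equation} with $\bZ(t)=[\bH^{\bU}(t),\bI^{\bU}(t)]$. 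The discussion preceding the statement already justifies that this form is attainable, since the EdgeConv/GNN realizing $f_{\mathrm{BND}}$ is an explicit single-step scheme of the type \cref{eq.explict_pde}--\cref{eq.graph-pde-k}, whose matrix-vector form is precisely $(\bA^{(k)}-\bI)\bZ^{(k)}$.

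With the identification in place, stability follows immediately from \cref{prop.diffusion_stable}: that proposition asserts that every dynamics of the form \cref{eq.graph_pde_equation} is Lyapunov stable, so the point cloud representation module inherits Lyapunov stability, and a bounded perturbation of the input features and positions produces a bounded deviation of the diffused embeddings $[\bF^{\bU},\bE^{\bU}]$ for all $t\in[0,T_f]$.

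The only genuinely nontrivial point --- and the step I would treat most carefully --- is verifying that the structural assumptions under which \cref{prop.diffusion_stable} was proved are actually inherited by $f_{\mathrm{BND}}$ when it is cast in the form $(\bA-\bI)\bZ$. Concretely, \cref{prop.diffusion_stable} relies on the attention-weight structure of \cref{eq.explict_pde}, where the diagonal and off-diagonal entries of $\bA$ are assembled from the normalized pairwise weights $a(\bz_i,\bz_j)$. Hence I would check that the particular $\bA$ representing $f_{\mathrm{BND}}$ indeed has the row-normalized (attention) form required by that proof, rather than merely being \emph{some} matrix function; if $f_{\mathrm{BND}}$ departs from this normalization, the appeal to \cref{prop.diffusion_stable} would need to be supplemented with a bound guaranteeing that the induced $\bA$ does not inflate the state norm. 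This compatibility check is the crux, after which the conclusion is immediate.
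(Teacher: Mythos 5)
Your proposal is correct and follows essentially the same route as the paper: identify the Beltrami module's state with $\bZ(t)$ in \cref{eq.graph_pde_equation} and conclude Lyapunov stability directly from \cref{prop.diffusion_stable}. Your closing caveat---that one must verify the induced $\bA$ actually has the row-normalized attention structure on which \cref{prop.diffusion_stable} rests---is a legitimate point of care that the paper's own (very terse) proof glosses over, but it does not change the argument's structure.
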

\begin{proof}
From \cref{prop.diffusion_stable}, it can be observed that the Beltrami neural diffusion, expressed in the form of \cref{eq.graph_pde_equation}, is stable. This stability property is also incorporated into our module for point cloud representation using Beltrami flow. As a result, it is readily seen that our module contains a stable component, which contributes to the overall stability of the module.
\end{proof}

\subsection{Stability of Feature-Position Transformer with Neural Diffusion}

Transformer with neural diffusion mentioned in the main paper is based on the neural ordinary differential equation (ODE). In order to investigate the stability of this Transformer with neural diffusion, we first discuss the stability of neural ODE in the following context. 

\begin{Lemma}[Gronwall's Inequality \cite{yan2019robustness,snow1972gronwall}]\label{thm.G_inequality}
Let $\calF: \Omega \times[0, T] \rightarrow \mathbb{R}^d$ be a continuous function, where $\Omega \subset \mathbb{R}^d$ denotes an open set. Let two independent diffusion states $\bZ_1, \bZ_2:[0, T] \rightarrow \Omega$ satisfy the initial value problems, which are given by
\begin{align}
\frac{\mathrm{d} \bZ_1(t)}{\mathrm{d} t}=\calF\left(\bZ_1(t), t\right), & \qquad \bZ_1(0)=\bZ_1, \\
\frac{\mathrm{d} \bZ_2(t)}{\mathrm{d} t}=\calF\left(\bZ_2(t), t\right), & \qquad \bZ_2(0)=\bZ_2,  
\end{align}
where $\bZ_1$ and $\bZ_2$ are the initial states of the diffusion processes. 
Assume there is a constant $M \geq 0$ such that, $\forall t \in[0, T]$, satisfying Lipschitz continuity given by 
\begin{align}
\left.\| \calF\left(\bZ_2(t), t\right)-\calF\left(\bZ_1(t), t\right)\right)\|\leq M\| \bZ_2(t)-\bZ_1(t) \|. 
\end{align}
Then, for any $t \in[0, T]$, we have  
\begin{align}
\left\|\bZ_2(t)-\bZ_1(t)\right\| \leq\left\|\bZ_2-\bZ_1\right\| \cdot \exp(M t). 
\end{align}
\end{Lemma}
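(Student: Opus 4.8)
The plan is to control the squared distance $\phi(t) \coloneqq \norm{\bZ_2(t)-\bZ_1(t)}^2$ through a scalar differential inequality and then integrate it. Working with the square rather than the norm itself is the key device: $\phi$ is a smooth function of $t$, being an inner product of $C^1$ curves, whereas $t \mapsto \norm{\bZ_2(t)-\bZ_1(t)}$ need not be differentiable where the two trajectories meet.

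First I would differentiate. Since $\phi(t)=\ip{\bZ_2(t)-\bZ_1(t)}{\bZ_2(t)-\bZ_1(t)}$ and each curve solves its own initial value problem,
\begin{align*}
\frac{\mathrm{d}\phi(t)}{\mathrm{d}t}
= 2\ip{\bZ_2(t)-\bZ_1(t)}{\calF(\bZ_2(t),t)-\calF(\bZ_1(t),t)}.
\end{align*}
Applying Cauchy--Schwarz and then the hypothesised Lipschitz estimate along the trajectories gives
\begin{align*}
\frac{\mathrm{d}\phi(t)}{\mathrm{d}t}
\le 2\norm{\bZ_2(t)-\bZ_1(t)}\cdot M\norm{\bZ_2(t)-\bZ_1(t)}
= 2M\phi(t).
\end{align*}
Observe that only the one-sided Lipschitz bound stated in the lemma, evaluated on this particular pair of solutions, is used; no global Lipschitz property on $\Omega$ is needed.

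Next I would eliminate the differential inequality with an integrating factor. Multiplying by $\exp(-2Mt)$ shows that $\frac{\mathrm{d}}{\mathrm{d}t}\bigl(\exp(-2Mt)\phi(t)\bigr)\le 0$, so $t\mapsto\exp(-2Mt)\phi(t)$ is nonincreasing on $[0,T]$. Comparing $t$ with the initial time $0$ yields $\exp(-2Mt)\phi(t)\le\phi(0)$, that is $\phi(t)\le\phi(0)\exp(2Mt)$. Taking square roots and recalling $\phi(0)=\norm{\bZ_2-\bZ_1}^2$ delivers the claimed bound $\norm{\bZ_2(t)-\bZ_1(t)}\le\norm{\bZ_2-\bZ_1}\exp(Mt)$.

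I expect there to be no genuine obstacle, only some regularity bookkeeping: one must confirm that $\phi$ is continuously differentiable, which follows because $\calF$ is continuous and each $\bZ_i$ is $C^1$, and that the inner-product manipulations are legitimate for the Euclidean norm used throughout. As an essentially equivalent alternative I could avoid differentiation altogether by passing to the integral form $\bZ_2(t)-\bZ_1(t)=(\bZ_2-\bZ_1)+\int_0^t\bigl(\calF(\bZ_2(s),s)-\calF(\bZ_1(s),s)\bigr)\,\mathrm{d}s$, taking norms to reach $g(t)\le g(0)+M\int_0^t g(s)\,\mathrm{d}s$ with $g(t)=\norm{\bZ_2(t)-\bZ_1(t)}$, and then invoking the scalar integral form of Gr\"onwall's inequality; I would prefer the squared-norm argument because it is self-contained and does not call on a second lemma.
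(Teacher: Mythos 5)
Your proof is correct. Note, however, that the paper does not actually prove this lemma: it is quoted as a known result (Gronwall's inequality) with citations to \cite{yan2019robustness,snow1972gronwall}, so there is no in-paper argument to compare against. Your squared-norm derivation --- differentiate $\phi(t)=\norm{\bZ_2(t)-\bZ_1(t)}^2$, bound $\phi'(t)\le 2M\phi(t)$ via Cauchy--Schwarz and the trajectory-wise Lipschitz hypothesis, and integrate with the factor $\exp(-2Mt)$ --- is the standard self-contained proof and is sound; the only implicit assumption is that $\norm{\cdot}$ is the Euclidean (inner-product) norm on $\mathbb{R}^d$, which you correctly flag and which matches the paper's setting. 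Your alternative via the integral form and the scalar Gronwall lemma would work equally well; the squared-norm route has the advantage of avoiding the nondifferentiability of $t\mapsto\norm{\bZ_2(t)-\bZ_1(t)}$ at points where the trajectories coincide, exactly as you observe.
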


In our study, we assume that graph neural diffusion processes are autonomous, i.e. $\calF(\bZ(t), t)=\calF(\bZ(t))$. We also assume that the neural networks represent the diffusion function satisfying Lipschitz continuity. This is a common assumption when using many activation functions that have been proven to exhibit this property \cite{gouk2021regularisation}. Thus, it is readily seen that the \cref{thm.G_inequality} is held for \cref{eq.transformer_ode} in the main paper, where $\bZ(t) = \big[ \bF^{\bU}(t), \bE^{\bU}(t) \big]$ and 
$\calF(\cdot) = [f^{\bU\bV}_{\mathrm{sc}}(\cdot), f_{\mathrm{fc}}(\cdot)]$. 

\begin{Lemma}[Non-intersection for ODE integral curves \cite{dupont2019augmented,yan2019robustness}]\label{thm.unique}
Let $\bZ_1(t)$ and $\bZ_2(t)$ be two solutions of the ODE in \cref{eq:NODE} of the main paper, which are corresponding to different initial conditions, i.e. $\bZ_1(0) \neq \bZ_2(0)$. Let $h_{\bm{\theta}}$ in \cref{eq:NODE} be Lipschitz continuous. Then, it holds that $\bZ_1(t) \neq \bZ_2(t), \forall t \in[0, \infty)$. 
\end{Lemma}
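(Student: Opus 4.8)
The plan is to argue by contradiction and reduce the non-intersection claim to the uniqueness content already packaged in Gronwall's inequality (\cref{thm.G_inequality}). Suppose the two integral curves meet at some time $t^\ast$, that is $\bZ_1(t^\ast)=\bZ_2(t^\ast)$. Because the initial conditions differ, $\bZ_1(0)\neq\bZ_2(0)$, any such crossing must occur at $t^\ast>0$. The aim is then to propagate this coincidence backward in time to $t=0$, which would force $\bZ_1(0)=\bZ_2(0)$ and contradict the hypothesis.

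To propagate backward I would reverse time. Setting $s=t^\ast-t$ and $\tilde{\bZ}_i(s):=\bZ_i(t^\ast-s)$ for $i\in\{1,2\}$, the chain rule yields $\frac{\ud\tilde{\bZ}_i(s)}{\ud s}=-h_{\btheta}(\tilde{\bZ}_i(s),\,t^\ast-s)$. The reversed vector field $\tilde h(\bZ,s):=-h_{\btheta}(\bZ,\,t^\ast-s)$ is Lipschitz in its state argument with the very same constant $M$ as $h_{\btheta}$, so the hypotheses of \cref{thm.G_inequality} are satisfied by the pair $\tilde{\bZ}_1,\tilde{\bZ}_2$ on $[0,t^\ast]$ (with $\calF$ there taken to be $\tilde h$). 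The decisive feature is that the reversed curves now share a common initial value, since $\tilde{\bZ}_1(0)=\bZ_1(t^\ast)=\bZ_2(t^\ast)=\tilde{\bZ}_2(0)$.

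Invoking Gronwall's inequality for the reversed system then gives
\begin{align}
\norm{\tilde{\bZ}_2(s)-\tilde{\bZ}_1(s)}\leq\norm{\tilde{\bZ}_2(0)-\tilde{\bZ}_1(0)}\cdot\exp(Ms)=0
\end{align}
for all $s\in[0,t^\ast]$, whence $\tilde{\bZ}_1\equiv\tilde{\bZ}_2$ on the whole interval. Evaluating at $s=t^\ast$ returns $\bZ_1(0)=\bZ_2(0)$, contradicting $\bZ_1(0)\neq\bZ_2(0)$. Therefore no crossing time $t^\ast$ can exist, and the curves stay distinct, $\bZ_1(t)\neq\bZ_2(t)$ for all $t\geq0$.

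I expect the only point needing care to be the verification that time reversal preserves the Lipschitz hypothesis with an unchanged constant, so that \cref{thm.G_inequality} applies as a black box rather than having to be re-derived; this is immediate since negating and time-shifting the argument of $h_{\btheta}$ leaves the state-Lipschitz bound intact. As a self-contained alternative that sidesteps the contradiction, one could instead establish a lower bound directly: from $\frac{\ud}{\ud t}\norm{\bZ_2(t)-\bZ_1(t)}^2\geq-2M\norm{\bZ_2(t)-\bZ_1(t)}^2$ (via Cauchy--Schwarz and the Lipschitz bound) one integrates to obtain $\norm{\bZ_2(t)-\bZ_1(t)}\geq\norm{\bZ_2(0)-\bZ_1(0)}\exp(-Mt)>0$, which yields the claim at once; I would favor the reverse-time route because it reuses \cref{thm.G_inequality} verbatim.
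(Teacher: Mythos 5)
Your proof is correct. Note, however, that the paper itself does not prove this lemma: it is stated with citations to \cite{dupont2019augmented,yan2019robustness} and used as an imported fact, so there is no in-paper argument to compare against. What you give is the standard uniqueness argument: assume a crossing at $t^\ast>0$, reverse time, observe that negating and time-shifting $h_{\btheta}$ preserves the state-Lipschitz bound with the same constant $M$, and then apply the paper's Gronwall lemma (\cref{thm.G_inequality}) to the reversed pair, whose initial values now coincide, forcing $\bZ_1(0)=\bZ_2(0)$ and a contradiction. This is a clean way to make the cited result self-contained using only machinery already present in the supplement; the only (correctly flagged) point of care is that \cref{thm.G_inequality} is stated on a finite horizon $[0,T]$, which suffices since you only need it on $[0,t^\ast]$ for each putative crossing time. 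Your alternative via the differential inequality $\frac{\ud}{\ud t}\norm{\bZ_2(t)-\bZ_1(t)}^2\geq-2M\norm{\bZ_2(t)-\bZ_1(t)}^2$, yielding the quantitative lower bound $\norm{\bZ_2(t)-\bZ_1(t)}\geq\norm{\bZ_2(0)-\bZ_1(0)}\exp(-Mt)$, is also valid and arguably stronger, since it gives an explicit separation rate rather than mere non-intersection; either route is acceptable.
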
 

The \cref{thm.unique} indicates that integral trajectories do not intersect in the state space. 
To achieve more stable neural ODE, the core is to constrain the difference between neighboring integral curves \cite{yan2019robustness}. From \cref{thm.G_inequality}, it is readily seen that the difference between two terminal states of the diffusion process is constrained by the difference between initial states with the weights based on the exponential of the Lipschitz constant. 
Therefore, it is possible to bind the output of the diffusion, i.e., the terminal states, by controlling the initial states and weights.
This implies there exists \emph{potential stability} for the neural ODEs, which is also mentioned in \cite{yan2019robustness}. 

\begin{Proposition}
The Transformer with neural diffusion provided in the main paper has potential stability. 
\end{Proposition}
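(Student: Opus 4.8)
The plan is to recognize the feature-position Transformer with neural diffusion as a concrete instance of the autonomous neural ODE in \cref{eq:NODE}, and then to invoke Gronwall's inequality (\cref{thm.G_inequality}) to obtain an explicit bound describing how the terminal embedding responds to perturbations of the input. As already noted in the text preceding the statement, \cref{eq.transformer_ode} has the autonomous form $\ddfrac{\bZ(t)}{t} = \calF(\bZ(t))$ with state $\bZ(t) = [\bF^{\bU}(t), \bE^{\bU}(t)]$ and vector field $\calF(\cdot) = [f^{\bU\bV}_{\mathrm{sc}}(\cdot), f_{\mathrm{fc}}(\cdot)]$, so the first step is simply to confirm that this identification places the Transformer module squarely within the neural-ODE framework on which \cref{thm.G_inequality,thm.unique} operate.

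First I would fix the Lipschitz hypothesis: assuming each trainable block ($f^{\bU\bV}_{\mathrm{sc}}$ and $f_{\mathrm{fc}}$, assembled from attention, linear, and activation layers) is Lipschitz continuous, the composite field $\calF$ is Lipschitz with some constant $M \geq 0$. This is the standing assumption already stated in the excerpt and is justified by the Lipschitz property of the activation functions employed. Second, I would take two inputs to the Transformer differing by a small perturbation, yielding two integral curves $\bZ_1(t)$ and $\bZ_2(t)$ with $\bZ_1(0) \neq \bZ_2(0)$; by \cref{thm.unique} these trajectories never intersect, and by \cref{thm.G_inequality} their terminal states obey $\norm{\bZ_2(T) - \bZ_1(T)} \leq \norm{\bZ_2(0) - \bZ_1(0)} \exp(M T)$.

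The final step is to read this bound as the asserted \emph{potential stability}: the deviation of the output embedding is controlled by the deviation of the input, amplified only by the finite factor $\exp(M T)$. Hence, by keeping the input perturbation small and by regularizing the network so that $M$ and the terminal time $T$ stay moderate, the output perturbation remains bounded, which is exactly the qualitative property claimed for the module.

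The main obstacle here is not the analysis — Gronwall does the heavy lifting — but pinning down what \emph{potential stability} should mean, since it is weaker than the Lyapunov and asymptotic notions defined earlier. I would therefore be explicit that the guarantee is a boundedness-under-perturbation statement carrying an a priori exponential growth factor, rather than convergence to an equilibrium; the qualifier \emph{potential} signals that genuine Lyapunov stability is attainable only when the constant $M$ (and thus the amplification $\exp(M T)$) is additionally constrained.
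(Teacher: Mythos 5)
Your proposal is correct and follows essentially the same route as the paper: the paper's own proof is a one-line appeal to the preceding discussion, which makes exactly your identification of \cref{eq.transformer_ode} with the autonomous neural ODE \cref{eq:NODE} and invokes Gronwall's inequality (\cref{thm.G_inequality}) to bound the terminal-state deviation by $\norm{\bZ_2(0)-\bZ_1(0)}\exp(MT)$ under the standing Lipschitz assumption. Your version is simply more explicit about the argument and about what ``potential stability'' means, which is a reasonable elaboration rather than a different approach.
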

\begin{proof}
The Transformer model with neural diffusion mentioned in the main paper, incorporates the module of neural ODEs with potential stability \cite{yan2019robustness}. As a result, it is readily seen that the Transformer model has the potential for stability. 
\end{proof}

\begin{figure*}[!htb]
\begin{center}
\includegraphics[width=0.85\textwidth]{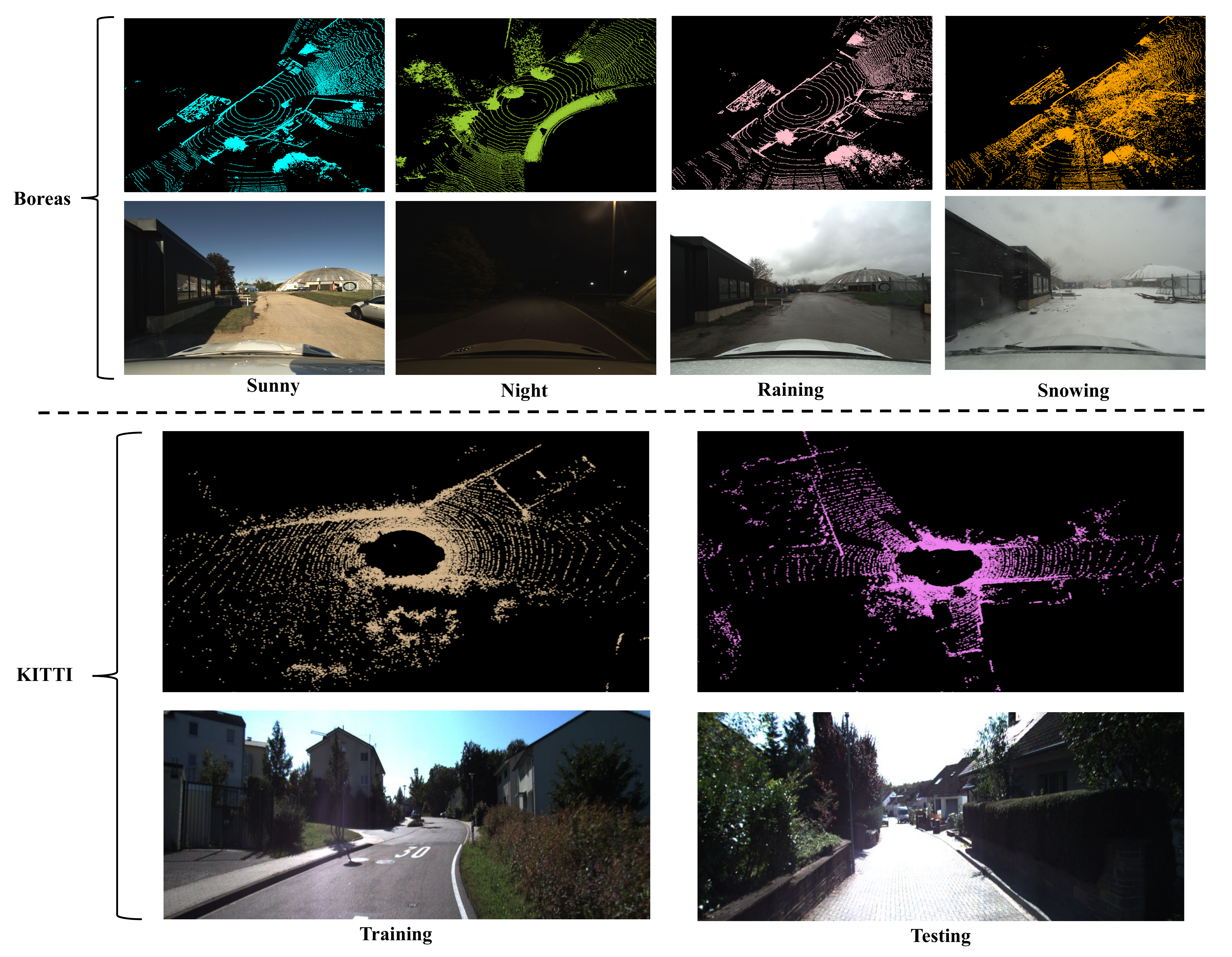}
\end{center}
\vspace{-0.5cm}
\caption{Examples of 3D LiDAR point clouds from the Boreas and KITTI datasets with their corresponding environments captured by cameras.}
\label{fig:dataset_example}
\end{figure*}

\section{Dataset and Implementation Details}\label{sect:data_model}

\subsection{Dataset Details}

\textbf{Boreas Dataset.}
The Boreas dataset is a publicly available outdoor dataset, which can be accessed at \url{https://www.boreas.utias.utoronto.ca/}.

In our study, we utilize five sequences from the Boreas dataset, including two captured under sunny weather conditions, one captured during nighttime, one captured under rainy weather conditions, and one captured under snowing weather conditions. The details and examples of the dataset used in our experiments are illustrated in \cref{stab:boreas} and \cref{fig:dataset_example}.
For each sequence, we select approximately $1500$ pairs of adjacent point clouds, with a substantial overlap between each pair. The Boreas dataset provides post-processed ground-truth poses for all sequences. To account for dynamic object perturbations within the large fields of view, we compute the transformation matrix between two consecutive LiDAR point clouds using the ground-truth poses. These ground-truth transformation matrices are used as references for the registration task.

Additionally, we introduce randomly generated transformation matrices, which serve as ground truth, for each point cloud pair. This allows us to obtain point cloud pairs with known ground-truth transformation matrices. Each synthetic transformation matrix consists of translations along the $x$, $y$, and $z$ axes, as well as rotations around the roll, pitch, and yaw axes.
The synthetic translation values are uniformly sampled from the ranges $[-1, 1]$, $[-2, 2]$, and $[-0.5, 0.5]$ along the $x$, $y$, and $z$ axes, respectively. The synthetic rotation values are uniformly sampled from the ranges $[0^\circ, 2^\circ]$, $[0^\circ, 2^\circ]$, and $[0^\circ, 15^\circ]$ around the roll, pitch, and yaw axes, respectively.
\cref{fig:dataset_example} provides some examples of the Boreas dataset.

\begin{table}[!htb] \footnotesize
\caption{Details of the Boreas dataset.}
\label{stab:boreas}
\centering
\newcommand{\tabincell}[2]{\begin{tabular}{@{}#1@{}}#2\end{tabular}}
\resizebox{0.45\textwidth}{!}{\setlength{\tabcolsep}{3pt} 
\begin{tabular}{l  c  c  c  c  c} 
\toprule
Scene & Weather Conditions & Training & Test \\
\midrule
boreas-2021-05-13-16-11 & sunny &  $\surd$ & $\times$ \\
boreas-2021-06-17-17-52 & sunny &  $\times$  &  $\surd$ \\
boreas-2021-09-14-20-00 & night &  $\times$ & $\surd$  \\
boreas-2021-04-29-15-55 & raining & $\times$ & $\surd$  \\
boreas-2021-01-26-11-22 & snowing &  $\times$ &  $\surd$ \\
\bottomrule
\end{tabular}}
\end{table}

\begin{table*}[!hbt] \footnotesize 
\caption{Point cloud registration performance using the synthetic Boreas dataset for training, where the ground truth of transformation matrices are generated randomly. The best and the second-best results are highlighted in \textbf{bold} and \underline{underlined}, respectively}
\label{tab:Boreas_training_random}
\centering
\newcommand{\tabincell}[2]{\begin{tabular}{@{}#1@{}}#2\end{tabular}}
\resizebox{\textwidth}{!}{\setlength{\tabcolsep}{7pt} 
\begin{tabular}{c c c c c c c c c c c c c}
\hline\hline
\multirow{3}{*}{Method} & \multicolumn{4}{c}{Testing on the Boreas (Sunny)} & \multicolumn{4}{c}{Testing on the Boreas (Night)} & \multicolumn{4}{c}{Testing on the KITTI} \\
& \multicolumn{2}{c}{\textit{RTE ($cm$)}}  &  \multicolumn{2}{c}{\textit{RRE ($^\circ$)}}
& \multicolumn{2}{c}{\textit{RTE ($cm$)}}  &  \multicolumn{2}{c}{\textit{RRE ($^\circ$)}}
& \multicolumn{2}{c}{\textit{RTE ($cm$)}}  &  \multicolumn{2}{c}{\textit{RRE ($^\circ$)}} \\
&  \textit{MAE}     & \textit{RMSE}  & \textit{MAE} & \textit{RMSE} 
&  \textit{MAE}     & \textit{RMSE}  & \textit{MAE} & \textit{RMSE} 
&  \textit{MAE}     & \textit{RMSE}  & \textit{MAE} & \textit{RMSE}   \\ 
\hline 
ICP 
& 2.167               & 4.546                   & 0.022               & 0.065 
& \underline{1.707}      & 3.767                   & 0.018               & 0.053             
& 4.369               & 9.677                   & 0.064               & 0.144     \\ 
DCP        
& 13.743              & 28.810                  & 0.210               & 0.539  
& 7.890               & 16.482                  & 0.027               & 0.120                
& 15.297              & 24.875                  & 0.220               & 0.438      \\ 
HGNN++ 
& 22.370              & 37.965                  & 0.079               & 0.196 
& 13.875              & 21.817                  & 0.047               & 0.062              
& 11.705              & 17.515                  & 0.102               & 0.189      \\ 
VCR-Net           
& 4.032               & 7.734                   & 0.026               & 0.068 
& 3.616               & 6.955                   & \textbf{0.004}       & 0.027    
& \underline{1.725}      & 3.410                   & 0.027               & 0.092      \\ 
PCT++             
& 2.138               & 4.030                   & 0.012               & 0.048 
& 4.190               & 11.996                  & 0.014               & 0.037             
& 1.785               & 8.568                   & \textbf{0.017}       & 0.065     \\ 
GeoTrans   
& \underline{2.041}      & \underline{3.239}          & \underline{0.008}      & \underline{0.018}   
& 2.241               & \underline{3.437}          & 0.009               & \underline{0.019}              
& 1.781               & \underline{2.676}          & 0.029               & \underline{0.064}     \\ \hline
PosDiffNet   
& \textbf{1.438}       & \textbf{2.293}           & \textbf{0.007}       & \textbf{0.015} 
& \textbf{1.437}       & \textbf{2.221}           & \underline{0.007}      & \textbf{0.017}  
& \textbf{1.384}       & \textbf{2.136}           & \underline{0.023}      & \textbf{0.048}   
\\
\hline\hline 
\end{tabular}}
\end{table*}

\textbf{KITTI Dataset.}
The KITTI dataset is a publicly available outdoor dataset, which can be accessed at \url{http://www.cvlibs.net/datasets/kitti/}.

Similar to the Boreas dataset, we utilize the provided ground-truth poses from the KITTI dataset to compute the transformation matrix between adjacent LiDAR point clouds. For our study, we select approximately $1600$ pairs for the training dataset and $1200$ pairs for the test dataset.
Additionally, we generate random transformation matrices in a similar manner as in the Boreas dataset. The synthetic translation values and rotation values are uniformly sampled from the ranges $[-1, 1]$, $[-2, 2]$, and $[-0.5, 0.5]$ along the $x$, $y$, and $z$ axes, and $[0^\circ, 2^\circ]$, $[0^\circ, 2^\circ]$, and $[0^\circ, 15^\circ]$ around the roll, pitch, and yaw axes, respectively.
Figure \ref{fig:dataset_example} showcases some examples from the KITTI dataset.

\subsection{Model Details}
We exploit the DGCNN whose architectures are identical to that in \cite{wang2019dynamic} and the Transformer based on self-cross attention the same as that in \cite{wang2019deep} to extract features of window central points. 
We use Beltrami neural diffusion modules to embed the patch-level and point-level features respectively, 
The patch-level features and point-level features are obtained from KPConv-FPN \cite{qin2022geometric, thomas2019kpconv} with $6$ encoder modules. 
Specifically, these Beltrami neural diffusion modules are based on the ``odeint'' \cite{torchdiffeq}, where the odefunction consists of $2$ EdgeConv layers \cite{wang2019dynamic} respectively, the integration time as $[0,1.0]$, the relative and absolute tolerances both as $0.01$ in the module. 
In the Transformer with neural diffusion modules, we use our feature-position Transformer in the main paper as the odefunction, and the integration time as $[0,2.0]$, the relative and absolute tolerances both as $0.01$ in the module. 
After the feature embeddings from Transformer with neural diffusion, we use weighted SVD \cite{besl1992method, qin2022geometric} to predict the transformation matrix and refine the results with local-to-global registration \cite{qin2022geometric}.  
In detail, our code is attached with the supplementary material. 
Our experiment code is developed based on the following repositories: 

\begin{itemize}
\item \url{https://github.com/twitter-research/graph-neural-pde}
\item \url{https://github.com/rtqichen/torchdiffeq}
\item \url{https://github.com/qinzheng93/GeoTransformer}
\item \url{https://github.com/magicleap/SuperGluePretrainedNetwork}
\item \url{https://github.com/huggingface/transformers}
\item \url{https://github.com/qiaozhijian/VCR-Net}
\item \url{https://github.com/iMoonLab/HGNN}
\item \url{https://github.com/Strawberry-Eat-Mango/PCT_Pytorch}
\item \url{https://github.com/WangYueFt/dcp}
\end{itemize}

\subsection{Baseline Models}

To showcase the exceptional performance of our model, we conducted a comprehensive comparison against several baseline methods, mainly including ICP \cite{besl1992method,segal2009generalized}, DCP \cite{wang2019deep}, HGNN \cite{feng2019hypergraph}, VCR-Net \cite{wei2020end}, PCT \cite{guo2021pct}, GeoTransformer (GeoTrans) \cite{qin2022geometric}, BUFFER \cite{ao2023buffer} and RoITr \cite{yu2023rotation}. The details are introduced as follows. 

As a conventional algorithm, ICP \cite{besl1992method, segal2009generalized} is an iterative optimization technique that does not rely on neural networks for feature learning, thereby obviating the need for a training process. 
In contrast, the remaining methods leverage learned point cloud features to establish point-level or path-level correspondence.
DCP \cite{wang2019deep} and VCR-Net \cite{wei2020end} use point cloud backbones to represent the features of point clouds and exploit the attention mechanisms to generate the virtual corresponding points which are used for transformation prediction. 
While HGNN \cite{feng2019hypergraph} and PCT \cite{guo2021pct} are efficient point cloud representation methods, which can be extended by incorporating attention modules to achieve point correspondence registration.
The corresponding methods are denoted as HGNN++ and PCT++ respectively. 
GeoTransformer \cite{qin2022geometric} is also a learning-based point cloud registration method, which adopts the corresponding super-point-level or patch-level correspondence to achieve the registration.  
BUFFER \cite{ao2023buffer} takes into account the accuracy, efficiency, and generalizability of registration by redesigning point-wise learners, patch-wise embedders, and inlier generators.
RoITr \cite{yu2023rotation} focuses on rotation invariance through the use of both global and local geometric information.
In addition, to compare the performance of our method and baselines, we utilize the MAE and RMSE for RRE and RTE in predictions obtained from various methods. These metrics are the same as those described in \cite{wei2020end} and \cite{wang2019deep}.
Furthermore, we include the rotation error (RE), translation error (TE), and registration recall (RR) metrics, as outlined in \cite{zhang20233d}, to evaluate the performance in the experiments conducted on the 10-m frame KITTI dataset.


\section{More Experiments and Ablation Studies}

\subsection{Performance on Synthetic Transformation.}

We evaluate the point cloud registration performance of PosDiffNet on synthetic datasets derived from subsets of the Boreas and KITTI datasets. Specifically, the training data is based on a subset of the Boreas dataset collected under sunny weather conditions. We generate point cloud pairs by randomly generating transformation matrices and using them to transform one point cloud to its corresponding point cloud.
During the testing phase, we evaluate PosDiffNet in three distinct categories, as mentioned in the main paper. We assess its performance and compare it to baseline methods using the metrics mentioned.
From \cref{tab:Boreas_training_random}, we observe that PosDiffNet almost outperforms the baseline methods across various metrics. This finding aligns with the results obtained from real transformation cases mentioned in the main paper.

\begin{table}  \footnotesize
\caption{Performance on the synthetic Boreas dataset under rainy weather conditions.}
\label{tab:Boreas_Rt_rain_random}
\centering
\newcommand{\tabincell}[2]{\begin{tabular}{@{}#1@{}}#2\end{tabular}}
\begin{tabular}{c | c c c c} 
\hline\hline
\multirow{2}{*}{Method}         & \multicolumn{2}{c}{\tabincell{c}{RTE ($cm$)}}  &  \multicolumn{2}{c}{\tabincell{c}{ RRE ($^\circ$)}} \\
                                &  \textit{MAE}     & \textit{RMSE}  & \textit{MAE} & \textit{RMSE}   \\
\hline
ICP             
& \underline{1.352}      & 2.989                   & 0.014               & 0.037                 \\ 
DCP              
& 4.354               & 8.571                   & 0.076               & 0.196                 \\ 
HGNN++           
& 16.687              & 26.790                  & 0.065               & 0.137                 \\ 
VCR-Net          
& 3.267               & 5.885                   & 0.025               & 0.065                 \\ 
PCT++           
& 1.697               & \underline{2.973}          & 0.011               & 0.026                 \\ 
GeoTrans    
& 2.015               & 3.078                   & \underline{0.009}      & \underline{0.020}                 \\ \hline
PosDiffNet       
& \textbf{1.310}       & \textbf{2.024}           & \textbf{0.007}       & \textbf{0.015}                  \\
\hline\hline
\end{tabular}
\end{table}
\begin{table} \footnotesize
\caption{Performance on the synthetic Boreas dataset under snowy conditions.}
\label{tab:Boreas_Rt_snow_random}
\centering
\newcommand{\tabincell}[2]{\begin{tabular}{@{}#1@{}}#2\end{tabular}}
\begin{tabular}{c | c c c c} 
\hline\hline
\multirow{2}{*}{Method}         & \multicolumn{2}{c}{\tabincell{c}{RTE ($cm$)}}  &  \multicolumn{2}{c}{\tabincell{c}{ RRE ($^\circ$)}} \\
                                &  \textit{MAE}     & \textit{RMSE}  & \textit{MAE} & \textit{RMSE}   \\
\hline
ICP              
& \underline{1.638}      & 8.107                   & 0.029               & 0.157                 \\ 
DCP              
& 3.676               & 6.568                   & 0.089               & 0.211                 \\ 
HGNN++           
& 12.150              & 19.082                  & 0.079               & 0.165                 \\ 
VCR-Net         
& 2.107               & 5.069                   & 0.021               & 0.141                 \\ 
PCT++         
& \textbf{1.558}       & 9.102                   & 0.027               & 0.174                 \\ 
GeoTrans    
& 2.575               & \underline{3.962}          & \underline{0.018}      & \underline{0.037}                 \\ \hline
PosDiffNet     
& 1.674               & \textbf{2.580}           & \textbf{0.011}       & \textbf{0.023}                  \\
\hline\hline
\end{tabular}
\end{table}

Furthermore, we compare PosDiffNet with baselines on the datasets under rainy and snowy weather conditions. From \cref{tab:Boreas_Rt_rain_random} and \cref{tab:Boreas_Rt_snow_random}, we also see that our PosDiffNet has advantages compared with other baselines. 

\subsection{Performance on Indoor Datasets}
we conduct the experiments on the indoor datasets, 3DMatch and 3DLoMatch in \cref{tab:3DMatch_3DLoMatch}. 
The metric is registration recall (RR) the same as that in \cite{qin2022geometric,zhang20233d}
We compare the results of more baselines, with those reported in the references. The performance of our method is comparable to the optimal results. These indoor datasets do not have noisy effects such as snow or rain, as found in the outdoor datasets. By design, our method is robust against these additive noises. Therefore, while our method is effective for indoor datasets, its performance may not match the SOTA performance for these indoor datasets. 

\begin{table}[!hbt] \footnotesize
\caption{Performance comparison on the indoor datasets including 3DMatch and 3DLoMatch. The metric is Registration Recall (RR).
The results of baselines are borrowed from \cite{yu2023rotation,mei2023unsupervised,lu2021hregnet,chen2022sc2,qin2022geometric,zhang20233d}. The symbol ``*'' is employed in MAC \cite{zhang20233d} to denote a distinct registration threshold compared to that used in most baselines. 
}
\label{tab:3DMatch_3DLoMatch}
\centering
\newcommand{\tabincell}[2]{\begin{tabular}{@{}#1@{}}#2\end{tabular}}
\resizebox{0.42\textwidth}{!}{\setlength{\tabcolsep}{14pt} 
\begin{tabular}{c | c c }
\hline\hline
\textbf{Method}    & \tabincell{c}{3DMatch \\ (RR [\%]) \\}  & \tabincell{c}{3DLoMatch \\ (RR [\%]) \\}   \\ \hline
FCGF               & 85.1                      & 40.1                                           \\                                      
D3Feat             & 81.6                      & 37.2                                           \\ 
SpinNet            & 88.6                      & 59.8                                           \\ 
Predator           & 89.0                      & 59.8                                           \\
YOHO               & 90.8                      & 65.2                                           \\ 
CoFiNet            & 89.3                      & 67.5                                           \\ 
HegNet             & 91.7                      & 55.6                                           \\
SC$^2$-PCR         & 93.3                      & 57.8                                           \\
GeoTrans           & 92.0                      & 75.0                                           \\
UDPReg             & 91.4                      & 64.3                                           \\
RoITr              & 91.9                      & 74.8                                           \\
MAC*               & 95.7                      & 78.9                                            \\ \hline
PosDiffNet         & 93.1                      & 76.0                                            \\    
\hline\hline 
\end{tabular}}
\end{table}

\subsection{Computational Complexity}
We provide the average inference time and the graphics processing unit (GPU) memory usage for each point cloud pair registration in  \cref{tab:computation_random}. The inference time is measured in seconds (s), and the GPU memory usage is measured in gigabytes (GB).
From \cref{tab:computation_random}, we observe that our PosDiffNet exhibits comparable computational complexity in terms of inference time and GPU memory usage when compared to the corresponding optimal methods. Additionally, PosDiffNet is suitable for real-time applications as its average inference time is significantly less than one second.

\begin{table}[!hbt] \footnotesize 
\caption{Average inference time and model size}
\label{tab:computation_random}
\centering
\newcommand{\tabincell}[2]{\begin{tabular}{@{}#1@{}}#2\end{tabular}}
\begin{tabular}{c | c c }
\hline\hline
Method                 & \tabincell{c}{Inference time\\}    & \tabincell{c}{GPU memory\\}         \\ \hline
ICP                    & 0.13s                              & N/A                                 \\
DCP                    & 0.09s                              & 2.71GB                              \\
HGNN++                 & 0.18s                              & 1.42GB                              \\
VCR-Net                & 0.16s                              & 1.41GB                              \\
PCT++                  & 0.62s                              & 1.99GB                              \\
GeoTrans               & 0.11s                              & 1.49GB                              \\
BUFFER                 & 0.20s                              & 1.66GB                               \\
RoITr                  & 0.11s                              & 1.55GB                               \\ \hline
PosDiffNet             & 0.12s                              & 1.91GB                               \\ 
\hline\hline 
\end{tabular}
\end{table}

\begin{table*}[!hbt] \footnotesize 
\caption{Ablation study for graph learning with Beltriami diffusion based on KITTI dataset.}
\label{tab:GNN_BND_random}
\centering
\newcommand{\tabincell}[2]{\begin{tabular}{@{}#1@{}}#2\end{tabular}}
\begin{tabular}{c | c c c c} 
\hline\hline
\multirow{2}{*}{Method}         & \multicolumn{2}{c}{\tabincell{c}{RTE ($cm$)}}  &  \multicolumn{2}{c}{\tabincell{c}{ RRE ($^\circ$)}} \\
                                &  \textit{MAE}     & \textit{RMSE}  & \textit{MAE} & \textit{RMSE}   \\
\hline
\tabincell{c}{GAT \cite{velivckovic2017graph} + Beltrami Diffusion \\} 
& 5.70            & 8.95           & 0.16               & 0.27 \\
\tabincell{c}{Self Attention \& KNN \cite{vaswani2017attention,yu2021cofinet} + Beltrami Diffusion \\}
& 4.28            & 6.99           & {0.15}          & {0.23} \\
\tabincell{c}{Point-PN \cite{zhang2023parameter} + Beltrami Diffusion}
& {3.91}       & {6.35}      & {0.15}          & 0.24 \\ \hline
\tabincell{c}{EdgeConv layers \cite{wang2019dynamic} + Beltrami Diffusion}
& 3.97            & 6.44           & {0.15}          & {0.23} \\
\hline\hline
\end{tabular}
\end{table*}

\subsection{Additional Ablation Study}

\textbf{Ablation Study for Graph Learning in the Beltrami Diffusion Module.}
We investigate the impact of graph learning in the Beltrami flow on the point cloud registration performance. In our PosDiffNet, we explore different graph learning methods by replacing the EdgeConv layers with graph attention network (GAT) \cite{velivckovic2017graph}, self-attention for $K$ nearest neighbors (KNN) \cite{vaswani2017attention, yu2021cofinet}, and Point-PN \cite{zhang2023parameter}. These methods are combined with Beltrami diffusion to achieve point feature representation and position embedding.
From \cref{tab:GNN_BND_random}, we observe that most of the methods exhibit similar performance, except for GAT. The EdgeConv layers demonstrate advantages in rotation prediction, while Point-PN outperforms the other methods in translation prediction.

\textbf{Comparison for the rewiring methonds.}
We compare the rewiring methods in the Beltrami diffusion module.
In our method, the rewiring is achieved by utilizing the positional embeddings of the 3D coordinates of points to identify neighboring points based on KNN method. While, in the original Beltrami diffusion, the positional embeddings are obtained from the high-dimensional point features, like those used for vertex embeddings. 
From \cref{tab:rewiring}, we observe that our rewiring method has advantages compared with that in the original Beltrami diffusion. 

\begin{table}[!hbt] \footnotesize
\caption{Performance comparison for difference graph rewiring of the Beltrami module.}
\label{tab:rewiring}
\centering
\newcommand{\tabincell}[2]{\begin{tabular}{@{}#1@{}}#2\end{tabular}}
\resizebox{0.48\textwidth}{!}{\setlength{\tabcolsep}{7pt} 
\begin{tabular}{c | c c}
\hline\hline
\textbf{Method}         & \tabincell{c}{PosdiffNet (with \\  original Beltrami \\ rewiring\\}    
                        & \tabincell{c}{PosdiffNet (with \\  our Beltrami \\ rewiring)   \\ }    \\ \hline
MAE of RTE (cm)         & 4.08    & 3.97                                                         \\
RMSE of RTE(cm)         & 6.74    & 6.44                                                         \\ 
MAE of RRE ($^\circ$)   & 0.15    & 0.15                                                         \\
RMSE of RRE($^\circ$)   & 0.23    & 0.23                                                         \\ 
\hline\hline
\end{tabular}}
\end{table}


\section{More Experimental Result Analysis}\label{sect:analysis}

We present additional experimental results to conduct further analysis of the performance of our PosDiffNet, focusing on the empirical probability of prediction errors. The details are provided below.

\begin{figure}[!hbt]
\centering
\includegraphics[width=0.49\textwidth]{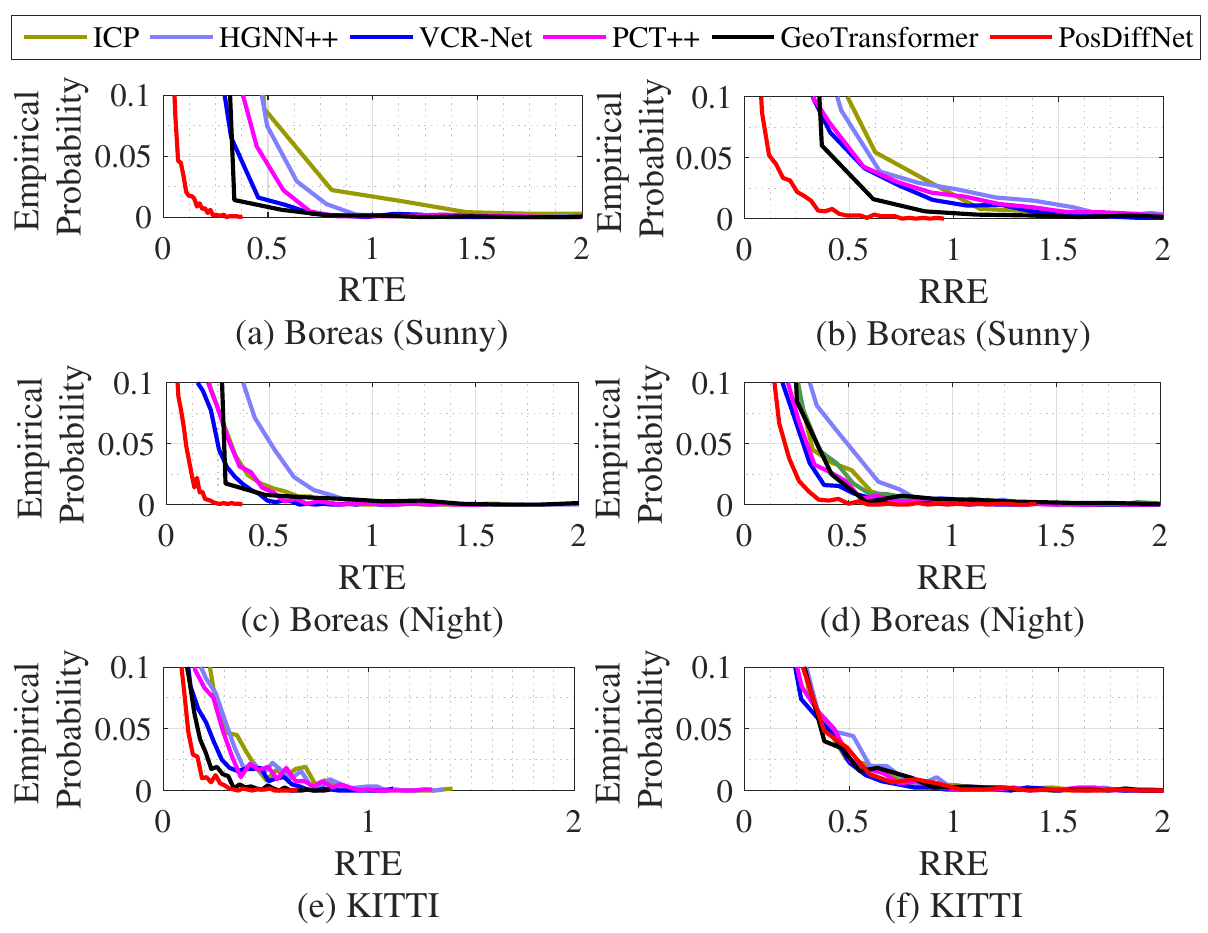}
\vspace{-0.5cm}
\caption{The empirical probability for RTE (meter [$m$]) and RRE (degree [$^{\circ}$]) using the Boreas dataset for training.}
\label{fig:pdf-error-boreas-random} 
\end{figure}


\begin{figure}[!hbt]
\centering
\includegraphics[width=0.49\textwidth]{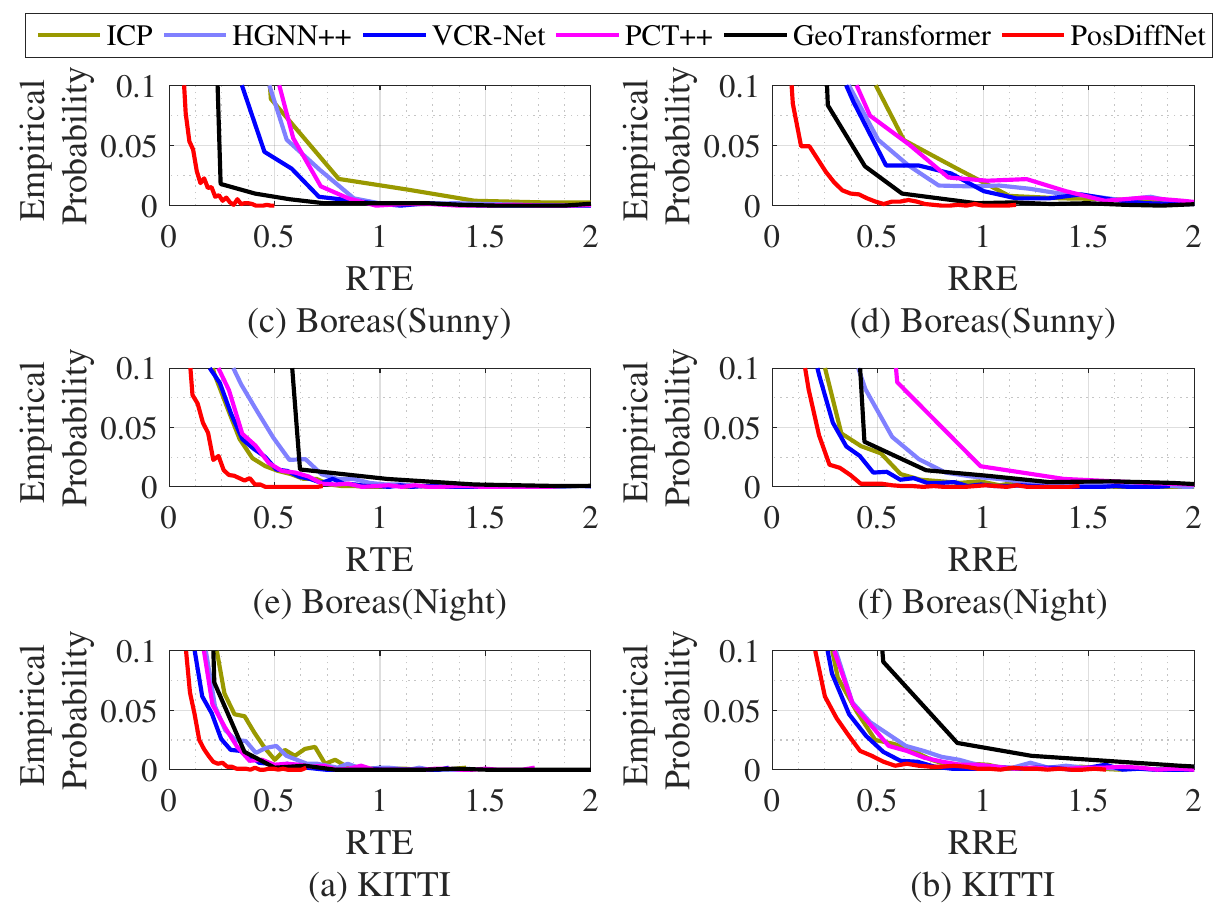}
\vspace{-0.5cm}
\caption{The empirical probability for RTE (meter [$m$]) and RRE (degree [$^{\circ}$]) using the KITTI dataset for training.}
\label{fig:pdf-error-kitti-random} 
\end{figure}


In order to show more performance from the statistic perspective, we present the empirical probability for RTE and RRE. 
This is also corresponding to the performance of Boreas and KITTI datasets mentioned in the main paper. 
From \cref{fig:pdf-error-boreas-random} and \cref{fig:pdf-error-kitti-random}, we observe that our PosDiffNet converges to probability one faster than other baselines in terms of empirical probability for RTE and RRE.
That is to say, the prediction from PosDiffNet is close to its ground truth in large probability with less prediction error occurring. 
As a result, our PosDiffNet has more accurate transformation prediction compared with the other methods. 

\section{Visualization and Examples} 
By utilizing the predicted transformation between two point cloud frames, we align the second frame with the coordinate system of the first frame, facilitating accurate alignment of the point clouds. The alignment results are shown as follows.
\begin{figure*}[!hbt]
\centering
\includegraphics[width=0.93\textwidth]{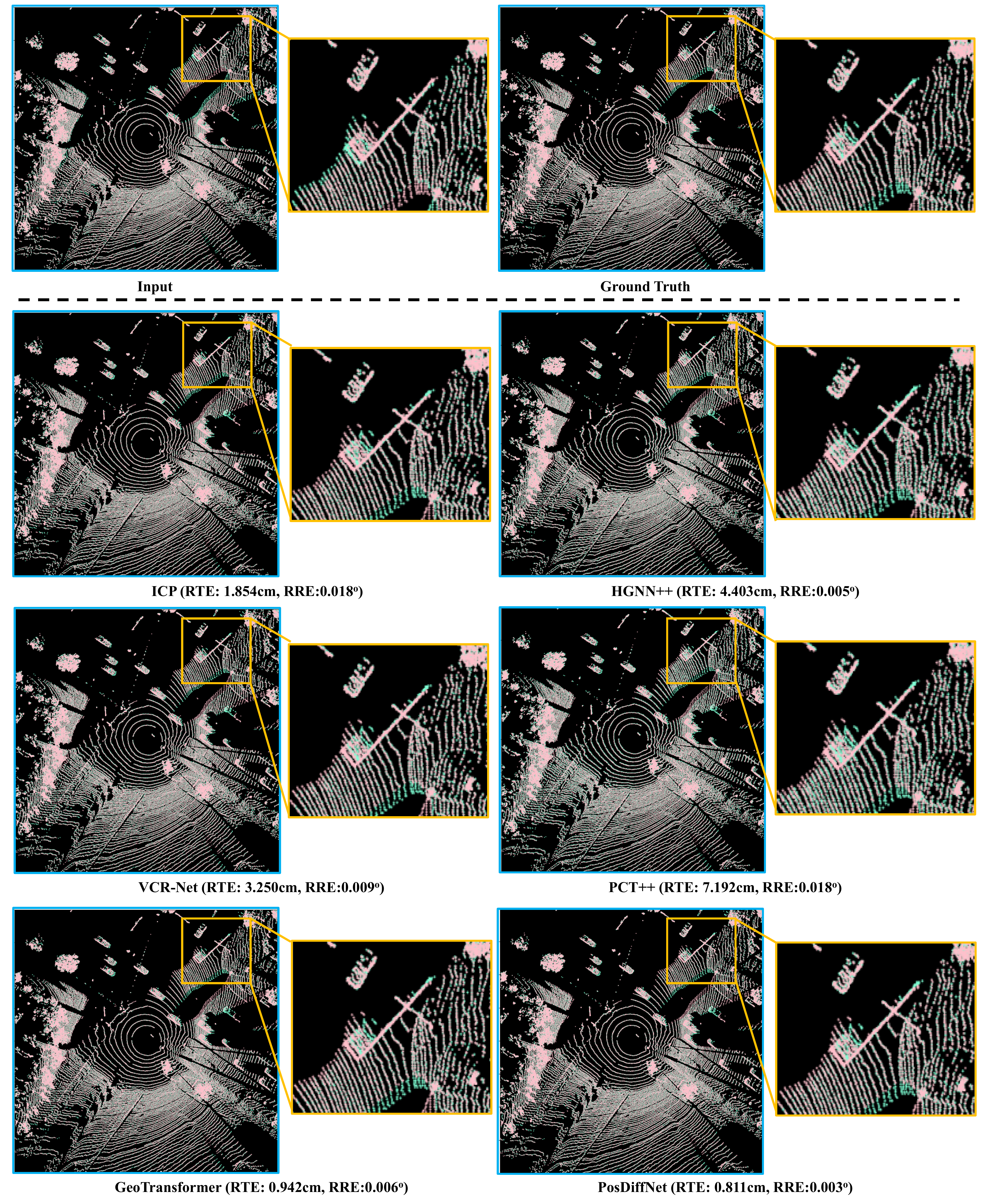}
\vspace{-0.3cm}
\caption{Point cloud registration examples of aligning the point cloud pairs with the predicted rotation and translation from the different methods using the Boreas dataset for training and testing.}
\label{fig:results_PCR_boreas} 
\end{figure*}

\begin{figure*}[!hbt]
\centering
\includegraphics[width=0.93\textwidth]{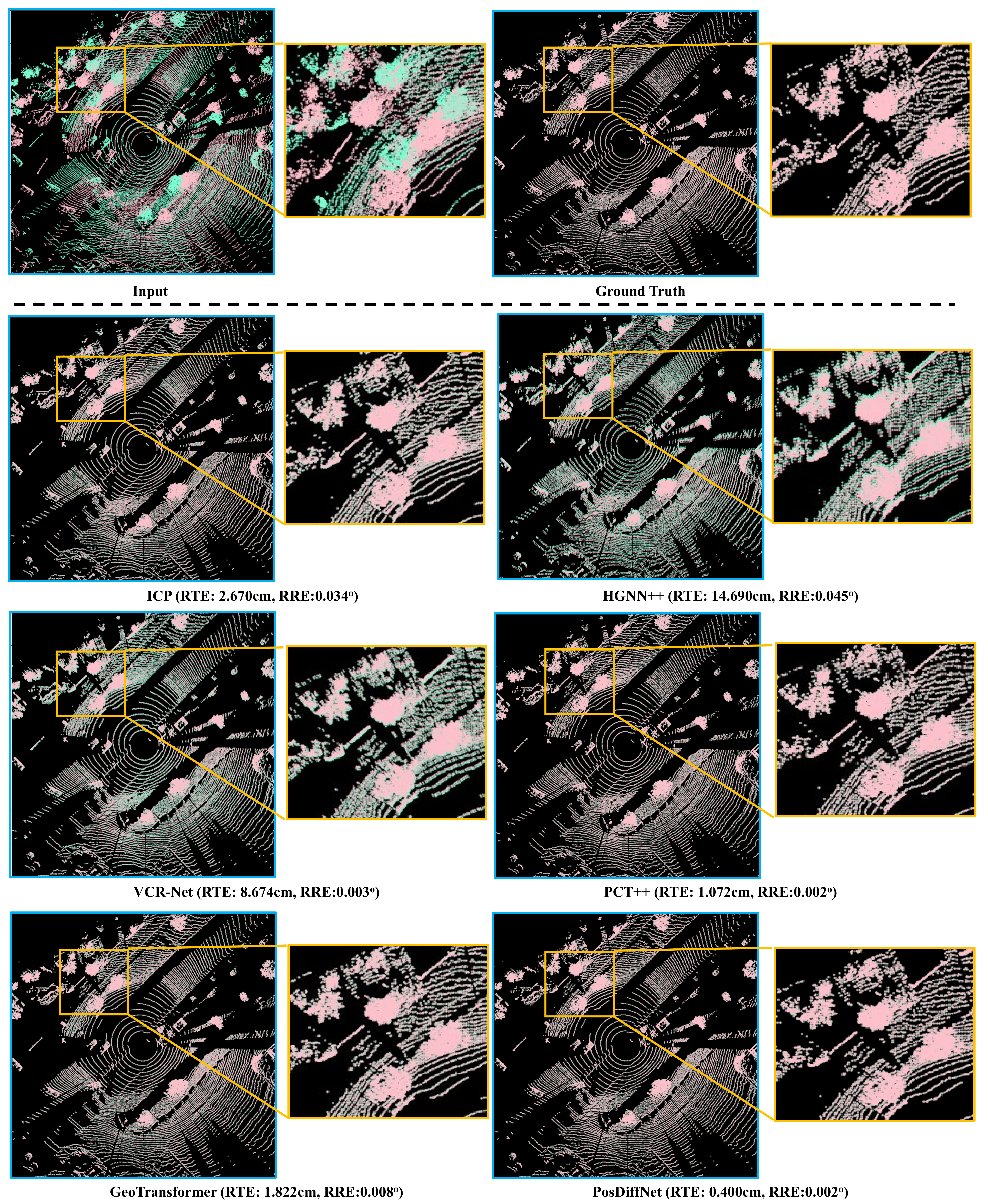}
\vspace{-0.3cm}
\caption{Point cloud registration examples of aligning the point cloud pairs with the predicted rotation and translation from the different methods using the synthetic Boreas dataset for training and testing.}
\label{fig:results_PCR_boreas_random} 
\end{figure*}

\cref{fig:results_PCR_boreas} and \cref{fig:results_PCR_boreas_random} illustrate multiple examples of point cloud alignment achieved using the predicted transformation, including both rotation and translation predictions. The quality of the predicted transformation can be directly assessed by examining the degree of overlap between the two frames. 

\section*{Broader Impact}
This work introduces positional neural diffusion into point cloud registration, addressing the challenges associated with large fields of view. Its implications span various sectors, including autonomous systems, 3D reconstruction and modeling, as well as environmental monitoring and geospatial analysis.
Specifically, our method for point cloud registration enables the development of more robust learning-based odometry for autonomous vehicles by leveraging neural diffusion for feature representation. This advancement plays a crucial role in enhancing 3D environment perception and localization capabilities.
Furthermore, in the field of environmental monitoring and geospatial analysis, our method offers the means to align point clouds obtained from different sensors, facilitating the creation of comprehensive and accurate 3D representations of both natural and built environments. This capability supports a wide range of applications, such as disaster management, land surveying, forestry, and climate change analysis.
To summarize, the broader impacts of our work are far-reaching within AI applications related to 3D point clouds.

\end{document}